\documentclass{article}

\usepackage{url}
\usepackage{color,graphicx,amssymb,xspace,setspace, amsmath}
\usepackage[small,bf]{caption} 
\definecolor{orange}{rgb}{.6,0.1,.6}

\usepackage[inline=true,margin=false]{fixme}
\usepackage{tabularx}
\usepackage{colortbl}
\usepackage{rotating}
\usepackage{times}
\usepackage{textcomp}
\usepackage{subfig}
\usepackage[algo2e,linesnumbered,ruled]{algorithm2e} 

\usepackage{appendix}
\usepackage{float}

\usepackage{mdwtab}

\usepackage{hyperref}

\newcommand{\bA}{\boldsymbol{A}}

\newcommand{\bB}{\boldsymbol{B}}
\newcommand{\bQ}{\boldsymbol{Q}}

\newcommand{\bo}{\boldsymbol{0}}
\newcommand{\bI}{\boldsymbol{I}}
\newcommand{\bX}{\boldsymbol{X}}
\newcommand{\bS}{\boldsymbol{S}}

\newcommand{\bU}{\boldsymbol{U}}
\newcommand{\bV}{\boldsymbol{V}}

\newcommand{\bSig}{\boldsymbol{\Sigma}}
\newcommand{\R}{\mathbb{R}}
\newcommand{\bC}{\boldsymbol{C}}

\newcommand{\bR}{\boldsymbol{R}}
\newcommand{\bW}{\boldsymbol{W}}
\newcommand{\hX}{\widehat{\boldsymbol{A}}}

\newcommand{\bE}{\boldsymbol{E}}
\newcommand{\E}{\mathbb{E}}

\newtheorem{theorem}{Theorem}

\newtheorem{lemma}{Lemma}

\newtheorem{definition}{Definition}
\newtheorem{corollary}{Corollary}

\newtheorem{proof}{Proof}

  \begin{document}

\title{Block CUR: Decomposing Matrices \\using Groups of Columns}

\author{Urvashi~Oswal,
	Swayambhoo~Jain,
        Kevin~S.~Xu,
        and~Brian~Eriksson
        \thanks{This research was performed while U.O., K.S.X., and B.E. were at Technicolor. UO is with the Department of Electrical and Computer Engineering, University of Wisconsin -- Madison, SJ is with Technicolor Research -- Palo Alto, KSX is with the Electrical Engineering and Computer Science Department, University of Toledo, BE is with Adobe -- San Jose. Author emails: {\tt uoswal@wisc.edu, swayambhoo.jain@technicolor.com, kevin.xu@utoledo.edu, brian.c.eriksson@gmail.com }}}
        
\date{}

\maketitle

\begin{abstract}
A common problem in large-scale data analysis is to approximate a matrix using a combination of specifically sampled rows and columns, known as CUR decomposition. Unfortunately, in many real-world environments, the ability to sample specific individual rows or columns of the matrix is limited by either system constraints or cost. In this paper, we consider matrix approximation by sampling predefined \emph{blocks} of columns (or rows) from the matrix. We present an algorithm for sampling useful column blocks and provide novel guarantees for the quality of the approximation. This algorithm has application in problems as diverse as biometric data analysis to distributed computing. We demonstrate the effectiveness of the proposed algorithms for computing the Block CUR decomposition of large matrices in a distributed setting with multiple nodes in a compute cluster, where such blocks correspond to columns (or rows) of the matrix stored on the same node, which can be retrieved with much less overhead than retrieving individual columns stored across different nodes. In the biometric setting, the rows correspond to different users and columns correspond to users' biometric reaction to external stimuli, {\em e.g.,}~watching video content, at a particular time instant. There is significant cost in acquiring each user's reaction to lengthy content so we sample a few important scenes to approximate the biometric response. An individual time sample in this use case cannot be queried in isolation due to the lack of context that caused that biometric reaction. Instead, collections of time segments ({\em i.e.,} blocks) must be presented to the user. The practical application of these algorithms is shown via experimental results using real-world user biometric data from a content testing environment.
\end{abstract}

\section{Introduction}
\label{sec:intro}

The ability to perform large-scale data analysis is often limited by two opposing forces.  The first force is the need to store data in a matrix format for the purpose of analysis techniques such as regression or classification.  The second force is the inability to store the data matrix completely in memory due to the size of the matrix in many application settings.  This conflict gives rise to storing factorized matrix forms, such as SVD or CUR decompositions~\cite{drineas08}.  

We consider a matrix $\bA$ with $m$ rows and $n$ columns, {\em i.e., }$\bA \in \R^{m \times n}$.  Using a truncated $k$ number of singular vectors ({\em e.g.,} where $k < \min\left\{m,n\right\}$), the singular value decomposition (SVD) provides the best rank-$k$ approximation to the original matrix.  The singular vectors often do not preserve the structure in original data. Preserving the original structure in the data may be desirable due to many reasons including interpret-ability in case of biometric data or for storage efficiency in case of sparse matrices. This has led to the introduction of the CUR decomposition, where the factorization is performed with respect to a subset of rows and columns of the matrix itself.  This specific decomposition describes the matrix $\bA$ as the product of a subset of matrix rows $\bR$ and a subset of matrix columns $\bC$ (along with a matrix $\bU$ that fits $\bA \approx \bC\bU\bR$).

Significant prior work has examined how to efficiently choose the rows and columns in the CUR decomposition and has derived worst-case error bounds ({\em e.g.,}~\cite{drineas09}). These methods have been applied successfully to many real-world problems including genetics \cite{app3}, astronomy \cite{app2}, and mass spectrometry imaging \cite{app1}. Unfortunately, a primary assumption of current CUR techniques, that individual rows and columns of the matrix can be queried, is either impossible or quite costly in many real world problems and instead require a block approach. 

In this paper, we consider the following two applications which represent the two main motivating factors for considering block decompositions. 

\textbf{Biometric data analysis.}  In applications where the ordering of rows or columns is meaningful, such as images, video, or speech data matrices, sampling contiguous blocks of columns adds contextual information that is necessary for interpretability of the factorized representation. One emerging application is audience reaction analysis of video content using biometrics. We focus on the scenario where users watch video content while wearing sensors, and changes in biometric sensors indicate changes in reaction to the content. For example, increases in heart rate or a spike in electrodermal activity indicate an increase in content engagement. In this paper, a matrix of biometric data such as \textit{Electrodermal Activity} (EDA) is collected from users reacting to external stimuli, e.g., watching video content. In prior work, EDA has shown to be useful for a variety of user analytics tasks to assess the reaction of viewers\cite{silveira2013predicting,jain2017compressed}. In this setting, $m$ is the number of users and $n$ corresponds to the number of time samples for which biometric reaction is collected. Unfortunately, there is significant cost in acquiring each user's reaction to lengthy content so instead we collect full responses (corresponding to some rows of the matrix) from only a limited number of users. For remaining users, we propose to collect responses for only a few important scenes of the video (corresponding to column blocks of the matrix) as shown in Figure~\ref{fig:biometric} and then \textit{approximate} their full response. An individual time sample in this use case cannot be queried in isolation due to the lack of context that caused that biometric reaction. Instead, collections of time segments (i.e., blocks) must be presented to the user. In this setting block sampling can be viewed as a \textbf{restriction} which leads to more interpretable solutions.

\textbf{Distributed storage systems.} Large-scale datasets often require distributed storage, a regime where there can be substantial overhead involved in querying individual rows or columns of a matrix.  In these regimes, it is more efficient to retrieve predefined \emph{blocks} of rows or columns at one time corresponding to the rows or columns stored on the same node, as shown in Figure \ref{fig:distributed}, in order to minimize the overhead in terms of latency while keeping the throughput constant. In doing so, one forms a Block CUR decomposition, with more details provided in Section~\ref{sec:experiments}. 
Current CUR decomposition techniques do not take advantage of this predefined block structure.

\begin{figure}[t]
\centering
%	\begin{subfigure}[c]{0.49\textwidth}
	\subfloat[]{
	  \label{fig:biometric}
	  \includegraphics[width=0.46\textwidth]{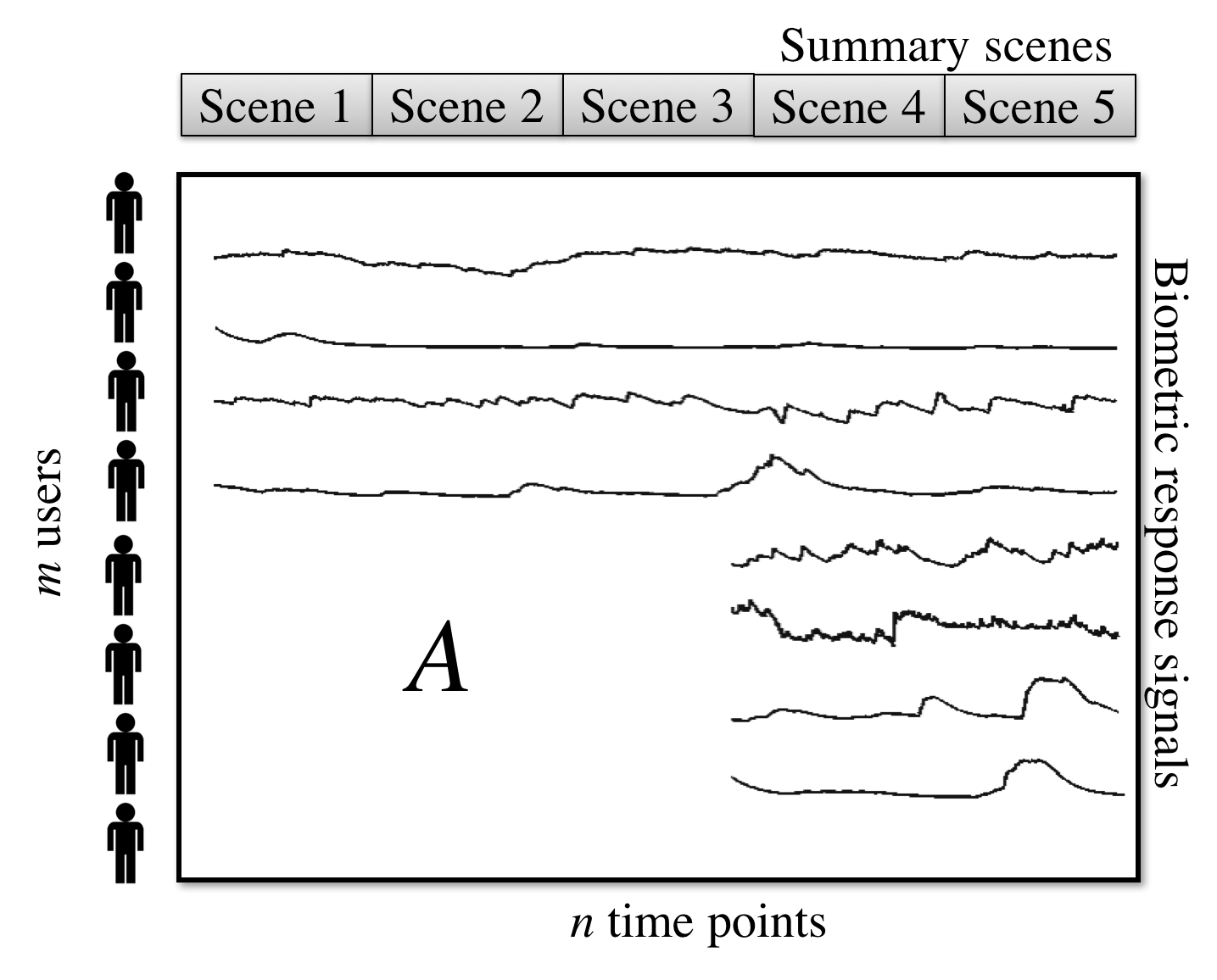}
	}
	\quad
%	  \end{subfigure}
	% \includegraphics[width=0.35\textwidth]{} \\
	 %(a) \\
%	\begin{subfigure}[c]{0.49\textwidth} 
	\subfloat[]{
	  \label{fig:distributed}
	  \includegraphics[width=0.46\textwidth]{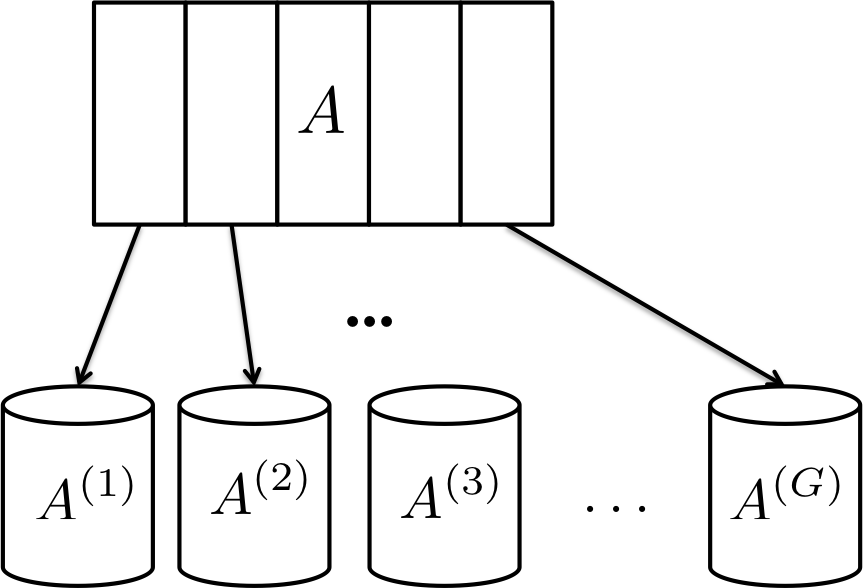} 
	}
%	\end{subfigure}
 
\caption[]{Applications: \subref{fig:biometric} Biometric data analysis. Blocks of columns or time instances correspond to scenes in a video and provide context for biometric reaction. \subref{fig:distributed} Distributed storage of a large matrix across multiple nodes in a cluster. Blocks are allocated to each of the $G$ nodes. }
 \label{fig:blockCURapp} 
\end{figure}

\textbf{Main contributions.} Using these insights into real-world applications of CUR decomposition, this paper makes a series of contributions.  We propose a simple randomized Block CUR algorithm for subset selection of rows and blocks of columns and derive novel worst-case error bounds for this randomized algorithm. On the theory side, we present new theoretical results related to approximating matrix multiplication and generalized $\ell_2$ regression in the block setting. These results are the fundamental building blocks used to derive the error bounds for the presented randomized algorithms.  The sample complexity bounds feature a non-trivial dependence on the matrix partition, $i.e.,$ the distribution of information in the blocks of the matrix. This dependence is non-trivial in that it cannot be obtained by simply extending the analysis of the original individual column CUR setting to the Block CUR setting. As a result, our analysis finds a sample complexity improvement on the order of the \textit{block stable rank} of a matrix (See Table \ref{Tab:Complexity} in Section \ref{sec:algorithm}).

On the practical side, this algorithm performs fast block sampling taking advantage of the natural storage of matrices in distributed environments (See Table \ref{Tab:Latency} in Section \ref{sec:experiments}). We demonstrate empirically that the proposed Block CUR algorithms can achieve a significant speed-up when used to decompose large matrices in a distributed data setting.  We conduct a series of CUR decomposition experiments using Apache Spark on Amazon Elastic Map-Reduce (Amazon EMR) using both synthetic and real-world data.  In this distributed environment, we find that our Block CUR approach achieves a speed-up of $2$x to $6$x for matrices larger than $12000\times 12000$.  This is compared with previous CUR approaches that sample individual rows and columns and while achieving the same matrix approximation error rate. We also perform experiments with real-world user biometric data from a content testing environment and present interesting use cases where our algorithms can be applied to user analytics tasks.

\section{Setup and background}
\label{sec:setup}
\subsection{Notation}

Let $\bI_k$ denote the $k \times k$ identity matrix and $\bo$ denote a zero matrix of appropriate size. We denote vectors (matrices) with lowercase (uppercase) bold symbols like $\boldsymbol{a}$  ($\bA$). The $i$-th row (column) of a matrix is denoted by $\bA_i$ ($\bA^i$). We represent the $i$-th block of rows of a matrix by $\bA_{(i)}$  and the $i$-th block of columns of a matrix by $\bA^{(i)}$. 

Let $[n]$ denote the set $\{1, 2,\dots, n\}$. Let $\rho =\text{rank}(\bA) \leq \min\{m,n\}$ and $k \leq \rho$. The singular value decomposition (SVD) of $\bA$ can be written as $\bA = \bU_{A,\rho} \bSig_{A,\rho} \bV_{A,\rho}^T$ where $\bU_{A,\rho} \in \R^{m \times \rho}$ contains the $\rho$ left singular vectors; $\bSig_{A,\rho} \in \R^{\rho \times \rho}$ is the diagonal matrix of singular values, $\sigma_i(\bA)$ for $i = 1, \dots, \rho$; and $\bV_{A,\rho}^T \in \R^{\rho \times n}$ is an orthonormal matrix containing the $\rho$ right singular vectors of $\bA$. We denote $\bA_k = \bU_{A,k} \bSig_{A,k} \bV_{A,k}^T$ as the best rank-$k$ approximation to $\bA$ in terms of Frobenius norm. The pseudoinverse of $\bA$ is defined as $\bA^{\dagger} = \bV_{A,\rho} \bSig_{A,\rho}^{-1} \bU_{A,\rho}^T$. Also, note that $\bC\bC^{\dagger}\bA = \bU_C\bU_C^T\bA$ is the projection of $\bA$ onto the column space of $\bC$, and  $\bA\bR^{\dagger}\bR = \bA\bV_{R,k}\bV_{R,k}^T$ is the projection of $\bA$ onto the row space of $\bR$. 

The Frobenius norm and spectral norm of a matrix are denoted by $\|\bA\|_F$ and $\|\bA\|_2$ respectively. The square of the Frobenius norm is given by $\| \bA\|_F^2 =$ $  \sum_{i = 1}^{m}\sum_{j = 1}^{n} A_{i,j}^2 $  $= \sum_{i = 1}^{k} \sigma_i^2(\bA)  $. The spectral norm is given by  $\| \bA \|_2 =  \max_i \sigma_i(\bA)$.

\subsection{The CUR problem and other related work}

The need to factorize a matrix using a collection of rows and columns of that matrix has motivated the CUR decomposition literature. CUR decomposition is focused on sampling rows and columns of the matrix to provide a factorization that is close to the best rank-$k$ approximation of the matrix. One of the most fundamental results for a CUR decomposition of a given matrix $\bA \in \R^{m \times n}$ was obtained in \cite{drineas08}. We re-state it here for the sake of completion and setting the appropriate context for our results to be stated in the next section. This relative error bound result is summarized in the following theorem.

\begin{theorem}\label{ThmCUR} (Theorem 2 from~\cite{drineas08} applied to $\bA^T$)
Given $\bA \in \R^{m \times n}$ and an integer $k \le \min \{m,n\}$, let $r = O(\frac{k^2}{\varepsilon^{2}} \ln(\frac{1}{\delta}))$ and $c = O(\frac{r^2}{\varepsilon^{2}} \ln(\frac{1}{\delta}) )$. 
There exist randomized algorithms such that, if $c$ columns are chosen to construct $\bC$ and $r$ rows are chosen to construct $\bR$, then with probability $\geq 1 - \delta$, the following holds:
 \begin{equation*} \| \bA - \bC \bU \bR\|_F \leq (1 + \varepsilon ) \| \bA -  \bA_k \|_F \end{equation*}
  where $\varepsilon, \delta \in (0,1)$, $\bU = \bW^{\dagger}$ and $\bW$ is the scaled intersection of $\bC$ and $\bR$.
\end{theorem}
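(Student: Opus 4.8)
The plan is to treat this as the standard relative-error CUR guarantee and reduce it to two sampling subproblems chained by submultiplicativity. Since the statement is Theorem~2 of~\cite{drineas08} transported to $\bA^T$, one route is simply to invoke that result, but to expose the mechanism I would reconstruct it in two stages: first capture the \emph{row} space of $\bA$ with a sampled row matrix $\bR$, and then solve a column-sampled regression whose solution is exactly $\bC\bU\bR$. Throughout, ``sampling'' means selection-and-rescaling, so I write $\bR = \bS_R\bA$ with $\bS_R \in \R^{r\times m}$ and $\bC = \bA\bS_C$ with $\bS_C \in \R^{n\times c}$. The scaled intersection is then $\bW = \bS_R\bA\bS_C = \bR\bS_C$, giving the identity $\bU = \bW^\dagger = (\bR\bS_C)^\dagger$ and hence
\begin{equation*}
\bC\bU\bR = \bA\bS_C(\bR\bS_C)^\dagger\bR,
\end{equation*}
which I will read as the solution of a sampled version of the right regression $\min_{\bX}\|\bA - \bX\bR\|_F$.

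For the first stage I would sample $r$ rows using leverage-score probabilities from the top-$k$ left singular vectors, $p_i \propto \|(\bU_{A,k})_i\|_2^2$ (or a mixture that also charges the residual mass). Applying the subspace-sampling lemma with $r = O(k^2\varepsilon^{-2}\ln(1/\delta))$ samples, I would argue that with probability $\ge 1-\delta/2$,
\begin{equation*}
\|\bA - \bA\bR^\dagger\bR\|_F \le (1+\varepsilon_1)\,\|\bA - \bA_k\|_F .
\end{equation*}
The driving mechanism is that enough leverage-weighted samples keep $\bS_R\bU_{A,k}$ well-conditioned, so the sampled Gram matrix approximates $\bU_{A,k}^T\bU_{A,k} = \bI_k$; this spectral control forces the row-space projection $\bA\bR^\dagger\bR$ to stay within a $(1+\varepsilon_1)$ factor of the best rank-$k$ fit.

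For the second stage I would fix $\bR$ and view $\bC\bU\bR = \bA\bS_C(\bR\bS_C)^\dagger\bR$ as the minimizer of the column-sampled problem $\min_{\bX}\|(\bA-\bX\bR)\bS_C\|_F$, used to approximate the exact solution $\bA\bR^\dagger$ of $\min_{\bX}\|\bA-\bX\bR\|_F$ (whose value is $\bA\bR^\dagger\bR$). Sampling $c$ columns with leverage scores of the row space of $\bR$ (its right singular vectors) and invoking the approximate-regression theorem, whose engine is the approximate-matrix-multiplication bound, I would obtain with $c = O(r^2\varepsilon^{-2}\ln(1/\delta))$ columns, and probability $\ge 1-\delta/2$,
\begin{equation*}
\|\bA - \bC\bU\bR\|_F \le (1+\varepsilon_2)\,\|\bA - \bA\bR^\dagger\bR\|_F .
\end{equation*}
The $r^2$ dependence enters because $\mathrm{rank}(\bR)\le r$, so preserving the regression subspace costs a number of column samples quadratic in $r$.

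Finally I would union-bound the two events and multiply the estimates, yielding $\|\bA-\bC\bU\bR\|_F \le (1+\varepsilon_1)(1+\varepsilon_2)\|\bA-\bA_k\|_F$, and rescale $\varepsilon_1=\varepsilon_2=\Theta(\varepsilon)$ so that $(1+\varepsilon_1)(1+\varepsilon_2)\le 1+\varepsilon$ for $\varepsilon\in(0,1)$. The main obstacle I anticipate is the second stage: the leverage scores that govern the column sample are computed with respect to the \emph{random} matrix $\bR$ produced in stage one, so the two steps are not independent. I must verify that the approximate-regression guarantee holds conditionally on each realization of $\bR$ with probability at least $1-\delta/2$ uniformly, so that the chained bound survives on the intersection of the two events without degrading the failure probability. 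The remaining delicate point is the bookkeeping that makes the sample-complexity exponents ($k^2$ then $r^2$) fall out of the subspace-sampling and approximate-multiplication constants, rather than a cruder polynomial.
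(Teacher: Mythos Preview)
The paper does not give its own proof of this theorem: it is quoted verbatim as Theorem~2 of~\cite{drineas08} (applied to $\bA^T$) and used only as background, so there is no in-paper proof to compare against. That said, your two-stage reconstruction---first bound $\|\bA-\bA\bR^\dagger\bR\|_F$ by row leverage sampling, then bound $\|\bA-\bA\bS_C(\bR\bS_C)^\dagger\bR\|_F$ via the sampled-regression/approximate-multiplication lemma with leverage scores of $\bR$, and chain---is exactly the mechanism of~\cite{drineas08} and is precisely the template the paper itself follows in the Appendix when proving its block analogues (Lemma~\ref{Thm3} and Corollary~\ref{Thm1}), including the conditional application of the regression bound given $\bR$ and the final $(1+\varepsilon')^2\le 1+\varepsilon''$ rescaling.
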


This theorem states that as long as enough rows and columns of the matrix are acquired ($r$ and $c$, respectively), then the CUR decomposition will be within a constant factor of the error associated with the best rank-$k$ approximation of that matrix. Central to the proposed randomized algorithm was the concept of sampling columns of the matrix based on a \emph{leverage score}.  The leverage score measures the contribution of each column to the approximation of $\bA$. 

\begin{definition} \label{leverage}
The {\textbf{leverage score}} of a column is defined as the squared row norm of the top-$k$ right singular vectors of $\bA$ corresponding to the column:
$$ \ell_j =  \| \bV_{A,k}^T \boldsymbol{e}_j \|_2^2,   \  j \in [n],$$ where $\bV_{A,k}$ consists of the top-$k$ right singular vectors of $\bA$ as its rows, and $\boldsymbol{e}_j $ is the $j$-th column of identity matrix which picks the $j$-th column of  $\bV_{A,k}^T$. 
\end{definition}

The CUR algorithm involves randomly sampling $r$ rows using probabilities generated by the calculated leverage scores to obtain the matrix $\bR$, and thereafter sampling $c$ columns of $\bA$ based on leverage scores of the $\bR$ matrix to obtain $\bC$. The key technical insight in \cite{drineas08} is that the leverage score of a column measures ``how much'' of the column lies in the subspace spanned by the top-$k$ left singular vectors of $\bA$; therefore, this method of samping is also known as \textit{subspace} sampling. By sampling columns that lie in this subspace more often, we get a relative-error low rank approximation of the matrix. The concept of sampling the important columns of a matrix based on the notion of subspace sampling first appeared in context of fast $\ell_2$ regression in \cite{drineas2006sampling} and was refined in \cite{drineas08} to obtain performance error guarantees for CUR matrix decomposition.

 These guarantees were subsequently improved in follow-up work \cite{drineas09}. Modified versions of this problem have been studied extensively for adaptive sampling \cite{wang12},  divide-and-conquer algorithms for parallel computations \cite{mackey11}, and input-sparsity algorithms \cite{woodruff14}. The authors of \cite{wang12} propose an adaptive sampling-based algorithm which requires only $c = O(k/\varepsilon)$ columns to be sampled when the entire matrix is known and its SVD can be computed. The authors of \cite{woodruff14} also proposed an optimal, deterministic CUR algorithm. In \cite{boutsidis2014}, the authors prove the lower bound of the column selection problem; at least $c = k/\varepsilon$ columns are selected to achieve the $(1 + \varepsilon)$ ratio. 

These prior results require sampling of arbitrary rows and columns of the matrix $\bA$ which may be either unrealistic or inefficient in many practical applications. In this paper, we focus on the problem of efficiently sampling pre-defined blocks of columns (or rows) of the matrix to provide a factorization that is close to the best rank-$k$ approximation of the matrix in the more natural environment of block sampling for biometric and distributed computation, explore the performance advantages of block sampling over individual column sampling, and provide the first non-trivial theoretical error guarantees for Block CUR decomposition. In the following section, we propose and analyze a randomized algorithm for sampling blocks of the matrix based on \textit{block leverage scores}.

\section{The Block CUR algorithm}
\label{sec:algorithm}
A block may be defined as a collection of $s$ columns or rows. For clarity of exposition, without loss of generality, we consider column blocks but the techniques and derivations also hold for row blocks by applying them to the transpose of the matrix. For ease of exposition, we also assume equal-sized blocks but one could easily extend the methods to blocks of varying sizes.  Let $G = \lceil n/s \rceil$ be the number of possible blocks in $\bA$. We consider the blocks to be predefined due to natural constraints or cost, such as data partitioning in a distributed compute cluster. 

The goal of the Block CUR algorithm is to approximate the underlying matrix $\bA$ using $g$ blocks of columns and $r$ rows, as represented in Figure~\ref{fig:blockCUR}. For example, in the biometric analysis setting each block could correspond to user reactions at a collection of time points corresponding to a scene in a movie. The goal is to approximate the users' reactions to the full movie using only their response to a summary of the movie (containing a subset of the scenes).  
\begin{figure}[t]
\centering
  \includegraphics[width=0.65\textwidth]{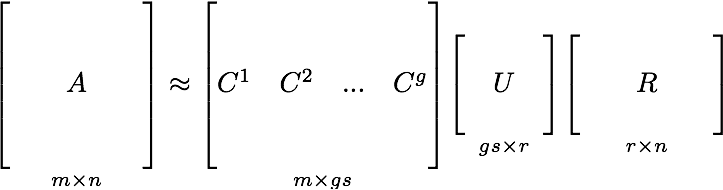}  
\caption{Example Block CUR decomposition,  where $\bC^{t} \in \R{^{m \times s}}$ for $t \in [g]$ is sampled from \{$\bA^{(j_t)}$ : $j_t \in [G]$\}. }
 \label{fig:blockCUR}
\end{figure}

Given the new regime of submatrix blocks, we begin by defining a {\em{block leverage score}} for each block of columns.
\begin{definition} \label{group} 
The {\textbf{block leverage score}} of a group of columns is defined as the sum of the squared row norms of the top-$k$ right singular vectors of $\bA$ corresponding to the columns in the block:
$$ \ell_g(\bA, k) =  \| \bV_{A,k}^T \bE_g \|_F^2,   \  g \in [G],$$ where $\bV_{A,k}$ consists of the top-$k$ right singular vectors of $\bA$, and $\bE_g $ consists of the corresponding block of columns in the identity matrix which picks the columns of $ \bV_{A,k}^T $ corresponding to the elements in block $g$. 
\end{definition}
Much like the individual column leverage scores defined in \cite{drineas08}, the block leverage scores measure how much a particular column block contributes to the approximation of the matrix $\bA$.

\subsection{Algorithm details}

The Block CUR Algorithm, detailed in Algorithm~\ref{groupalgo}, takes as input the matrix $\bA$ and returns as output an $r \times n $ matrix $\bR$ consisting of a small number of rows of $\bA$ and an $m \times c$ matrix $\bC$ consisting of a small number of column blocks from $\bA$.

\begin{algorithm2e}[h]
\caption{Block CUR}
\SetKwInOut{Input}{Input}
    \SetKwInOut{Output}{Output}
    \Input{$\bA$, target rank $k$, size of each block $s$, error parameter $\varepsilon$, positive integers $r$, $g$}
    \Output{$\bC, \bR, \hX = \bC \bU \bR$}
\begin{enumerate}
\item \textit{Row subset selection: } Sample $r$ rows uniformly from $\bA$ according to\\ $p_i  = 1/m$ for $i \in [m]$ and compute $\bR = \bS^T_R\bA$.
\item \textit{Column block subset selection: }  For $t \in [g]$, select a block of columns\\ $j_t \in [G]$ independently with probability $p_{j_t} $ $ = \frac{\ell_i(\bR, r)} {r} $$ = \frac{\| \bV_{R, r}^T \bE_i \|_F^2} {r} $\\ for $i \in [G]$ and update $\bS$, where $\bV_{R,r}$ consists of the top-$r$ right singular\\ vectors of $\bR$, and $\bE_i $ picks the columns $ \bV_{R,r}^T $ corresponding to the \\elements in block $i$.  Compute $\bC = \bA\bS$.  
\item \textit{CUR approximation}: $\hX = \bC \bU \bR$ where $\bU = \bW^{\dagger}$, and $\bW = \bR\bS$ is the \\scaled intersection of $\bR$ and $\bC$.
\end{enumerate}
\label{groupalgo}
\end{algorithm2e}

In Algorithm~\ref{groupalgo}, for $t \in [g]$, block $j_t \in [G]$ is sampled with some probability $p_{j_t}$ and scaled using matrix $\bS \in \R^{n \times gs}$. The $(j_t,t)$-th non-zero $s \times s$ block of $\bS$ is defined as $ \bS_{j_t, t} = \bI_s/\sqrt{g p_{j_t}} $ where $g = c/s$ is the number of blocks picked by the algorithm. This sampling matrix picks the blocks of columns and scales each block to compute $\bC = \bA\bS$. A similar sampling and scaling matrix $\bS_R$ is defined to pick the blocks of rows and scale each block to compute $\bR = \bS_R^T\bA$. An example of sampling matrix $\bS$ with blocks chosen in order $[1,3,2]$ is as follows:

\[
\small
\bS_{n \times gs} =  
\begin{bmatrix}
\frac{1}{\sqrt{g p_{1}}} \bI_s&\bo&\bo\\
\bo&\bo&\frac{1}{\sqrt{g p_{2}}} \bI_s\\
\bo& \frac{1}{\sqrt{g p_{3}}}\bI_s&\bo\\
 \bo&\bo&\bo
 \end{bmatrix}.
\]

In addition to considering block sampling of columns, another advantage of this algorithm is not requiring the computation of a full SVD of $\bA$. In many large-scale applications, it may not feasible to compute the SVD of the entire matrix $\bA$.  In these cases, algorithms requiring knowledge of the leverage scores cannot be used. Instead, we use an estimate of the block leverage scores called the \textit{approximate block leverage scores}. A subset of the rows (corresponding to users) are chosen uniformly at random, and the block scores are calculated using the top-$k$ right singular vectors of this row matrix instead of the entire $\bA$ matrix. This step is not the focus of the experiments in this paper so it can also be replaced with other fast approximate calculations of leverage scores involving sketching or additional sampling \cite{approxLevScores,xu14}. The advantage of using our approximate leverage scores is that the same set of rows is used to approximate the scores and also to compute the CUR approximation. Hence no additional sampling or sketching steps are required. In terms of the biometric application, each row corresponds to a user's biometric reaction to a movie. Since collecting user reactions to lengthy content can be expensive, eliminating redundant sampling leads to huge savings in resources.

The running time of Algorithm \ref{groupalgo} is essentially driven by the time required to compute the SVD of $\bR$, \textit{i.e.,}  $\mathcal{O}(SVD(\bR)$) time, and the time to construct $\bR$, $\bC$ and $\bU$. Construction of $\bR$ requires $\mathcal{O}(rn$) time, construction of $\bC$ takes $\mathcal{O}(mc$) time, construction of $\bW$ requires $\mathcal{O}(rc$) time and construction of $\bU$ takes $\mathcal{O}(r^2c$) time.

\subsection{Theoretical results and discussion}

The main technical contribution of the paper is a novel relative-error bound on the quality of approximation using blocks of columns or rows to approximate a matrix $\bA \in \R^{m \times n}$. Before stating the main result, we define two important quantities that measure important properties of the matrix $\bA$ that are fundamental to the quality of approximation. We first define a property of matrix rank relative to the collection of matrix blocks.  Specifically, we focus on the concept of {\em{matrix stable rank}} from \cite{stablerank} and define the {\em{block stable rank}} as the minimum stable rank across all matrix blocks.

\begin{definition}  
  Let $\bV_{A,k}$ consist of the top-$k$ right singular vectors of $\bA$. Then the \textbf{block stable rank} is defined as
  $$\alpha_A =  \min_{g \in [G]} \frac{\| \bV_{A, k}^T \bE_g\|_F^2}{\| \bV_{A,k}^T \bE_g\|_2^2},$$
  where $\bE_g $ consists of the corresponding block of columns in the identity matrix that picks the columns of $ \bV_{A,k}^T $ corresponding to the elements in block $g$. 
\end{definition}

Intuitively, the above definition gives a measure of how informative the worst matrix column block is. The second property is a notion of {\em{column space incoherence}}. When we sample rows uniformly at random, we can give relative error approximation guarantees when the matrix $\bA$ satisfies an incoherence condition. This avoids pathological constructions of rows of $\bA$ that cannot be sampled at random.

\begin{definition}
\label{def:incoherence}
The top-$k$ {\textbf{column space incoherence}} is defined as $$ \mu := \mu(\bU_{A,k}^T) = \frac{m}{k} \max_i \| \bU_{A,k}^T \boldsymbol{e}_i \|_2^2, $$
where $\boldsymbol{e}_i$ picks the $i$-th column of $\bU_{A,k}^T$.  
\end{definition}

The column space incoherence is used to provide a guarantee for fast approximation without computing the SVD of the entire matrix $\bA$. Equipped with these definitions, we state the main result that provides a relative-error guarantee for the Block CUR approximation in Theorem \ref{Thm2}.

\begin{theorem}\label{Thm2}
Given $\bA \in \R^{m \times n}$ with incoherent top-$k$ column space, i.e.,~$\mu \leq \mu_0$, let $r = O\left(\mu_0 \frac{k^2}{\varepsilon^{2}} \ln(\frac{1}{\delta})\right)$ and $g = O\left(\frac{ r^2}{ \alpha_R \varepsilon^{2}} \ln(\frac{1}{\delta}) \right)$. There exist randomized algorithms such that, if $r$ rows and $g$ column blocks are chosen to construct $\bR$ and $\bC$, respectively, then with probability $\geq 1 - \delta$, the following holds:
 \begin{equation*} \| \bA - \bC \bU \bR\|_F \leq (1 + \varepsilon ) \| \bA -  \bA_k \|_F, \end{equation*}
  where $\varepsilon, \delta \in (0,1)$ and $\bU = \bW^{\dagger}$ is the pseudoinverse of scaled intersection of $\bC$ and $\bR$.
\end{theorem}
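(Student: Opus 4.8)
The plan is to follow the two-stage template behind Theorem~\ref{ThmCUR}, but to replace column subset selection by \emph{block} subset selection and row subset selection by \emph{uniform} sampling, isolating the single step where the block structure actually changes the sample complexity. Writing $\bV \equiv \bV_{R,r}$ for the top-$r$ right singular vectors of $\bR$ and $\bD = \bA - \bA\bR^{\dagger}\bR$ for the residual of $\bA$ against the row space of $\bR$, and recalling from Algorithm~\ref{groupalgo} that $\bC\bU\bR = \bA\bS(\bR\bS)^{\dagger}\bR$, I would prove two facts and chain them:
\[
\| \bA - \bC\bU\bR \|_F \le (1+\varepsilon)\,\| \bD \|_F
\quad\text{and}\quad
\| \bD \|_F \le (1+\varepsilon)\,\| \bA - \bA_k \|_F ,
\]
so that the product gives $(1+\varepsilon)^2 \le 1+3\varepsilon$ after rescaling. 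The first inequality is the \emph{block regression} statement (how well the block-sampled coefficient matrix $\bA\bS(\bR\bS)^{\dagger}$ reproduces the exact least-squares coefficient $\bA\bR^{\dagger}$); the second is the \emph{row-sampling} statement (how well the uniformly sampled rows capture the top-$k$ action of $\bA$).

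For the first inequality I would establish a block analogue of the generalized $\ell_2$ regression lemma: viewing $\bC\bU = \bA\bS(\bR\bS)^{\dagger}$ as the minimizer of the sampled objective $\min_{\bX}\|(\bA-\bX\bR)\bS\|_F$, the solution plugged back into the full objective is within $(1+\varepsilon)$ of optimal provided $\bS$ satisfies two structural conditions on the column space encoded by $\bV$: (C1) a subspace-embedding / rank-preservation bound $\|\bI_r - \bV^{T}\bS\bS^{T}\bV\|_2 \le 1/2$, and (C2) an approximate-multiplication bound $\|\bD\,\bS\bS^{T}\bV\|_F \le \sqrt{\varepsilon/2}\,\|\bD\|_F$. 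Here (C2) is exactly a block approximate matrix multiplication approximating the product $\bD\bV = \bo$ (which vanishes because $\bA\bR^{\dagger}\bR$ is the projection onto the row space of $\bR$). The lemma itself is pure linear algebra and carries over from the single-column case unchanged; what is new is verifying (C1)--(C2) for block sampling.

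The heart of the argument, and the step I expect to be the main obstacle, is the block approximate matrix multiplication bound that turns (C1)--(C2) into a sample-complexity requirement with the right dependence on the block stable rank $\alpha_R$. Because blocks are sampled independently with $p_i = \|\bV^{T}\bE_i\|_F^2/r$, the estimator $\bV^{T}\bS\bS^{T}\bV$ is unbiased for $\bI_r$, and a direct variance computation gives
\[
\E\,\| \bI_r - \bV^{T}\bS\bS^{T}\bV \|_F^2 \le \frac{1}{g}\sum_{i=1}^{G}\frac{1}{p_i}\,\| \bV^{T}\bE_i\bE_i^{T}\bV \|_F^2 .
\]
The crucial manipulation is to bound each block product by $\|\bV^{T}\bE_i\bE_i^{T}\bV\|_F^2 \le \|\bV^{T}\bE_i\|_F^2\,\|\bV^{T}\bE_i\|_2^2$ and then replace the \emph{spectral} factor using the stable-rank inequality $\|\bV^{T}\bE_i\|_2^2 \le \alpha_R^{-1}\|\bV^{T}\bE_i\|_F^2$, which collapses the sum to $\frac{r^2}{g\,\alpha_R}$ and forces $g = O\!\big(\frac{r^2}{\alpha_R\varepsilon^2}\ln\frac1\delta\big)$; the same device bounds (C2) by $\frac{r}{g\,\alpha_R}\|\bD\|_F^2$, which is subdominant. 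This is precisely where the naive extension fails: treating each block as rank-one (bounding $\|\bV^{T}\bE_i\|_2$ by $\|\bV^{T}\bE_i\|_F$, i.e.\ $\alpha_R=1$) discards a factor of up to $s$ and reproduces the individual-column count times the block size. Keeping the spectral norm per block and paying only the stable rank is what yields the $\alpha_R$ improvement, and it dictates a concentration/Markov argument that preserves the per-block spectral quantity; the $\ln(1/\delta)$ factor comes from boosting the constant-probability Markov guarantee to a high-probability tail.

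For the second inequality I would sample rows uniformly and lean on incoherence to convert this into leverage-score sampling for $\bA^{T}$. Since the row leverage scores satisfy $\|\bU_{A,k}^{T}\boldsymbol{e}_i\|_2^2 \le k\mu_0/m$ under $\mu\le\mu_0$, the uniform probabilities $1/m$ oversample every row by a factor at least $1/\mu_0$ relative to exact leverage sampling; plugging this $\beta = 1/\mu_0$ undersampling factor into the (known) relative-error row-space approximation bound --- the row half of Theorem~\ref{ThmCUR} --- gives $\| \bD \|_F \le (1+\varepsilon)\|\bA - \bA_k\|_F$ with $r = O\!\big(\mu_0\frac{k^2}{\varepsilon^2}\ln\frac1\delta\big)$ rows. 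Finally I would combine the two stages by a union bound over their failure events (each of probability $\le \delta/2$), noting that the column-block stage is analyzed conditionally on the realized $\bR$, so that $\bV$, $\alpha_R$, and the scores $\ell_i(\bR,r)$ are all well defined, which closes the argument.
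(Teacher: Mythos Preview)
Your plan is correct and follows essentially the same two-stage architecture as the paper: a block $\ell_2$-regression lemma giving $\|\bA-\bC\bU\bR\|_F\le(1+\varepsilon)\|\bA-\bA\bR^{\dagger}\bR\|_F$, chained with a uniform-row step that invokes incoherence as $\beta=1/\mu_0$ in the Drineas--Mahoney--Muthukrishnan row bound, then boosted from constant probability to $1-\delta$ by repetition. The block approximate matrix multiplication idea --- replacing $\|\bV^{T}\bE_i\|_2^2$ by $\alpha_R^{-1}\|\bV^{T}\bE_i\|_F^2$ to extract the stable-rank gain --- is exactly the paper's Lemma~\ref{lemma4}.

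The one organizational difference worth noting is in the regression step: you package it via the ``structural conditions'' (C1)--(C2) (subspace embedding plus an AMM bound on $\bD\bS\bS^{T}\bV$), whereas the paper's Lemma~3 (the block regression lemma) works through an explicit three-term decomposition using $\Omega=(\bV^{T}\bS)^{\dagger}-(\bV^{T}\bS)^{T}$ and bounds $\|\Omega\|_2$, $\|\bD\bS\|_F$, and $\|\bD\bS\bS^{T}\bV\|_F$ separately via Markov and Lemma~\ref{lemma4}. These are equivalent organizations of the same linear-algebraic content and yield the same $g=O(r^2/(\alpha_R\varepsilon^2))$ complexity; your (C1)--(C2) framing is the more modern sketching-style presentation, while the paper's $\Omega$-decomposition follows the original Drineas--Mahoney--Muthukrishnan template more closely. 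Also, the paper first obtains the combined bound with constant probability ($\ge 0.7$) and then boosts the whole algorithm, rather than boosting each stage separately and union-bounding as you suggest; both are fine and give the same $\ln(1/\delta)$ factors.
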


We provide a sketch of the proof and highlight the main technical challenges in proving the claim in Section~\ref{sec:ProofSketch} and defer the proof details to the Appendix. In Section \ref{sec:ProofSketch}, we first provide a relative-error guarantee (Lemma \ref{Thm3}) for the approximation provided by Algorithm \ref{groupalgo}. After applying standard boosting techniques (explained in Section~\ref{sec:ProofSketch}) we get the main result stated above. 

We detail the differences between our technique and prior CUR algorithms here. This includes additional assumptions required, algorithmic trade-offs, and discussion of sampling and computational complexity. 

\textbf{Block stable rank.} Theorem \ref{Thm2} tells us that the number of blocks required to achieve an $\varepsilon$ relative error depends on the structure of the blocks (through $\alpha_R$). Intuitively, this is saying the groups that provide more information improve the approximation faster than less informative groups. The $\alpha_R$ term depends on the stable or numerical rank (a stable relaxation of exact rank) of the blocks. The stable rank $\alpha = \|\bA\|^2_F/ \|\bA\|_2^2$ is a relaxation of the rank of the matrix; in fact, it is stable under small perturbations of the matrix $\bA$ \cite{stablerank}. For instance, the stable rank of an approximately low rank matrix tends to be low. The $\alpha_R$ term defined in Theorem \ref{Thm2} is the minimum stable rank of the column blocks. Thus, the $\alpha_R$ term gives a dependence of the block sampling complexity on the stable ranks of the blocks. It is easy to check that $1 \leq \alpha_R \leq s$. In the best case, when all the groups have full stable rank with equal singular values, $\alpha_R$ achieves its maximum. The worst case $\alpha_R = 1$ is achieved when a group or block is rank-$1$. That is, sampling groups of rank $s$ gives us a lot more information than groups of rank 1, which leads to a reduction in the total sampling complexity. %See Section~\ref{sec:discussion} for more discussion on the sampling complexity.\\

\textbf{Incoherence.} The column space incoherence (Definition \ref{def:incoherence}) is used to provide a guarantee for approximation without computing the SVD of the entire matrix $\bA$. However, if it is possible to compute the SVD of the entire matrix, then the rows can be sampled using row leverage scores, and the incoherence assumption can be dropped. The relative error guarantee, independent of incoherence, for the full SVD Block CUR approximation is stated as Corollary 1 in the Appendix.  The corollary follows by a similar analysis as Theorem~\ref{Thm2} so we defer the proof to the Appendix. Other than block sampling, the setup of this result is equivalent to the traditional column sampling result stated in Lemma~\ref{Thm3}. Next, we compare the block sampling result with extensions of traditional column sampling. 

\begin{table}[t]
\centering
\caption{Table comparing the sample complexity needed for given $\varepsilon$ using our Block CUR result and a bound obtained by trivial extension of traditional CUR. For ease of comparison, we show the results with full SVD computation ignoring incoherence assumption stated in Corollary 1 in the Appendix. The $\alpha_R$ term we introduce satisfies the bound $ 1 \le \alpha_R  \le s$. 
}
\begin{tabular}{  c c c}
\hline
Results & $r$ & $g$  \\ \hline 
Traditional CUR extended to block setting & $\mathcal{O}\left(\frac{k^2}{\varepsilon^{2}}\log\left(\frac{1}{\delta}\right)\right)$ &   $\mathcal{O}\left(\frac{k^4}{\varepsilon^{6}}\log^3(\frac{1}{\delta})\right)$ \\  
 Our Block CUR & $\mathcal{O}\left(\frac{k^2}{\varepsilon^{2}}\log(\frac{1}{\delta})\right)$ & $\mathcal{O}\left(\frac{k^4}{\boldsymbol{\alpha_R} \varepsilon^{6}}\log^3(\frac{1}{\delta})\right)$ \\
 \hline   
\end{tabular}
\label{Tab:Complexity}
\end{table}

\textbf{Sample complexity: comparison with extensions of traditional CUR results.} In order to compare the sample complexity of our block sampling results with trivial block extensions of traditional column sampling results we focus our attention on the similar leverage score based CUR result in Theorem \ref{ThmCUR}. A simple extension to block setting could be obtained by considering a larger row space in which blocks are expanded to vectors. This would lead to a sample complexity bound obtained by Theorem \ref{ThmCUR}. The sampling complexity of the Block CUR derived in Theorem \ref{Thm2} tells us the number of sampling operations or queries that need to be made to memory in order to construct the $\bR$ and $\bC$ matrices.  As shown in Table \ref{Tab:Complexity} the column block sample complexity obtained by traditional CUR extensions results is always greater than or equal to those required by our Block CUR result because $ 1 \le \alpha_R  \le s$. This happens since traditional CUR-based results are obtained by completely ignoring the block structure of the matrix. 

As a side note, the authors are aware that more recent adaptive column sampling-based algorithms such as  \cite{wang12,woodruff14}  require only $c = O(k/\varepsilon)$ columns to be sampled. These results assume full computation of the SVD is possible, and they are byproducts of heavy machinery using ideas like deterministic, Batson/Srivastava/Spielman (BSS) sampling and adaptive sampling on top of leverage scores. By extending these advanced techniques to block sampling, it may be possible to obtain tighter bounds but it does not bring new insight into the problem of sampling blocks and unnecessarily complicates the discussion. Therefore we defer this extension to future work.

\subsection{Proof sketch of main result}\label{sec:ProofSketch}

In this section, we provide a sketch of the proof of Theorem \ref{Thm2} and defer the details to the Appendix. The proof of the main result rests on two important lemmas. These results are important in their own right and could be useful wherever the block sampling issue arises. The first result concerns approximate block multiplication. 

\textbf{Block multiplication lemma.} The following lemma shows that the multiplication of two matrices $\bA$ and $\bB$ can be approximated by the product of the smaller sampled and scaled block matrices. This is the key lemma in proving the main result.

\begin{lemma}\label{lemma4}
Let $\bA \in \R^{m \times n}$, $\bB \in \R^{n \times p}$,$\varepsilon, \delta \in (0,1)$, and $\alpha_A$ be defined as $\alpha_A := \min_{i \in [G]} \frac{\| \bA^{(i)}\|^2_F}{\| \bA^{(i)}\|^2_2} .$  Construct $\bC_{m \times gs}$ and $\bR_{gs \times n}$ using sampling probabilities $p_i$  that satisfy  $$ p_i \geq \beta \frac{\|\bA^{(i)}\|_F^2}{\sum_{j = 1}^{G}\|\bA^{(j)}\|_F^2},$$ for all $i \in [G]$ and where $\beta \in (0,1]$. Then, with probability at least $1 - \delta$, 
$$   \| \bA\bB - \bC\bR \|_F  \leq \frac{1}{\delta\sqrt{\beta g \alpha_A }}\|\bA\|_F\|\bB\|_F. $$
\end{lemma}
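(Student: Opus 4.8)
The plan is to treat $\bC\bR$ as a sum of $g$ independent, identically distributed random matrices and run the classical second-moment-plus-Markov argument for randomized matrix multiplication, with the one genuinely new ingredient being the passage through the block stable rank $\alpha_A$. First I would write the sampled product explicitly: since the $t$-th selected block contributes the rescaled term $\frac{1}{g p_{i_t}}\bA^{(i_t)}\bB_{(i_t)}$, we have $\bC\bR = \sum_{t=1}^{g}\bX_t$ with $\bX_t = \frac{1}{g p_{i_t}}\bA^{(i_t)}\bB_{(i_t)}$, while the exact product decomposes blockwise as $\bA\bB = \sum_{i=1}^{G}\bA^{(i)}\bB_{(i)}$. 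A direct computation gives $\E[\bX_t] = \sum_{i=1}^{G} p_i \frac{1}{g p_i}\bA^{(i)}\bB_{(i)} = \frac{1}{g}\bA\bB$, so the estimator is unbiased, $\E[\bC\bR]=\bA\bB$.

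Next I would bound the second moment. Because the $\bX_t$ are i.i.d. and the cross terms vanish, $\E\|\bA\bB-\bC\bR\|_F^2 = g\,\E\|\bX_1 - \E\bX_1\|_F^2 \le g\,\E\|\bX_1\|_F^2 = \frac{1}{g}\sum_{i=1}^{G}\frac{\|\bA^{(i)}\bB_{(i)}\|_F^2}{p_i}$, after discarding the nonnegative $-\frac1g\|\bA\bB\|_F^2$ term coming from subtracting the mean.

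This is where the main obstacle lies. In the single-column case $(s=1)$ each block product is a rank-one outer product, so $\|\bA^{(i)}\bB_{(i)}\|_F = \|\bA^{(i)}\|_2\|\bB_{(i)}\|_2$ exactly and column-norm sampling is essentially optimal; for genuine blocks this equality fails while the sampling probabilities $p_i$ remain tied to $\|\bA^{(i)}\|_F$ rather than $\|\bA^{(i)}\|_2$. I would bridge the gap with the submultiplicative inequality $\|\bA^{(i)}\bB_{(i)}\|_F^2 \le \|\bA^{(i)}\|_2^2\,\|\bB_{(i)}\|_F^2$, substitute the hypothesis $1/p_i \le \|\bA\|_F^2/(\beta\|\bA^{(i)}\|_F^2)$, and then recognize the ratio $\|\bA^{(i)}\|_2^2/\|\bA^{(i)}\|_F^2 \le 1/\alpha_A$ from the definition of the block stable rank. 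Summing over $i$ and using $\sum_i\|\bB_{(i)}\|_F^2 = \|\bB\|_F^2$ collapses everything to $\E\|\bA\bB-\bC\bR\|_F^2 \le \frac{1}{\beta g \alpha_A}\|\bA\|_F^2\|\bB\|_F^2$. The delicate point is precisely the choice to split each block product as $\|\bA^{(i)}\|_2\cdot\|\bB_{(i)}\|_F$ so that the spectral-to-Frobenius mismatch is absorbed entirely into $\alpha_A$; a cruder bound replaces $\alpha_A$ by its worst value $1$ and loses the block-structure savings promised in Table~\ref{Tab:Complexity}.

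Finally I would convert the expectation bound into a high-probability statement. By Jensen's inequality $\E\|\bA\bB-\bC\bR\|_F \le \|\bA\|_F\|\bB\|_F/\sqrt{\beta g \alpha_A}$, and Markov's inequality applied to the nonnegative quantity $\|\bA\bB-\bC\bR\|_F$ shows it exceeds $\frac1\delta$ times its mean with probability at most $\delta$. Hence with probability at least $1-\delta$ we obtain $\|\bA\bB-\bC\bR\|_F \le \frac{1}{\delta\sqrt{\beta g \alpha_A}}\|\bA\|_F\|\bB\|_F$, as claimed.
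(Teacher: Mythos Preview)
Your proposal is correct and follows essentially the same argument as the paper's proof: unbiasedness of the sampled product, a second-moment bound yielding $\E\|\bA\bB-\bC\bR\|_F^2 \le \frac{1}{g}\sum_i \|\bA^{(i)}\bB_{(i)}\|_F^2/p_i$, the submultiplicative step $\|\bA^{(i)}\bB_{(i)}\|_F \le \|\bA^{(i)}\|_2\|\bB_{(i)}\|_F$ together with the block-stable-rank substitution, and finally Jensen plus Markov. The paper carries out the variance computation entrywise while you work directly at the level of matrix Frobenius norms, but the two are equivalent; the only slip is verbal---the discarded term $-\tfrac{1}{g}\|\bA\bB\|_F^2$ is non\emph{positive}, not nonnegative---and the inequality still goes the right way.
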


The proof details are provided in the Appendix. The main difficulty in proving this claim is to account for the block structure. Even though one could trivially extend individual column sampling analysis to this setting by serializing the blocks, this would lead to trivial bounds as they do not leverage the block structure. Our results exploit this knowledge and hence introduce a dependence of the sample complexity on the block stable rank of the matrix.

Using the block multiplication lemma we prove Lemma~\ref{Thm3}, which states a non-boosting approximation error result for Algorithm~\ref{groupalgo}.

\begin{lemma}\label{Thm3}
Given $\bA \in \R^{m \times n}$ with incoherent top-$k$ column space, i.e.,~$\mu \leq \mu_0$, let $r = O(\mu_0\frac{k^2}{\varepsilon^2}) $  and $g = O(\frac{ r^2}{\alpha_R\varepsilon^2} )$. If rows and column blocks are chosen according to Algorithm \ref{groupalgo}, then with probability at least 0.7, the following holds:
 \begin{equation*} \| \bA - \bC \bU \bR\|_F \leq (1 + \varepsilon ) \| \bA -  \bA_k \|_F, \end{equation*}
  where $\varepsilon \in (0,1)$, $\bU = \bW^{\dagger}$ is the pseudoinverse of the scaled intersection of $\bC$ and $\bR$.
\end{lemma}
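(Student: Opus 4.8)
The plan is to follow the two-stage template of the relative-error CUR proof underlying Theorem~\ref{ThmCUR}, but to replace the column-sampling approximate-multiplication steps by their block analogue, Lemma~\ref{lemma4}, so that the block stable rank $\alpha_R$ enters the column-block sample complexity. The starting point is the algebraic identity $\bC\bU\bR = \bA\bS(\bR\bS)^{\dagger}\bR$, obtained by substituting $\bC = \bA\bS$, $\bW = \bR\bS$, and $\bU = \bW^{\dagger}$. This exhibits $\bC\bU\bR$ as the output of solving the \emph{column-block-sampled} least-squares problem $\min_{\bX}\|\bA\bS - \bX\bR\bS\|_F = \min_{\bX}\|\bC - \bX\bW\|_F$, whose minimizer is $\bX = \bC\bW^{\dagger}$, and then right-multiplying by $\bR$. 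Accordingly I would bound the error by comparing $\bC\bU\bR$ first to the row-space projection $\bA\bR^{\dagger}\bR$ and then to $\bA$:
\begin{equation*}
\|\bA - \bC\bU\bR\|_F \le (1+\varepsilon_2)\,\|\bA - \bA\bR^{\dagger}\bR\|_F \le (1+\varepsilon_2)(1+\varepsilon_1)\,\|\bA - \bA_k\|_F,
\end{equation*}
and choose $\varepsilon_1,\varepsilon_2 = \Theta(\varepsilon)$ so the product is at most $1+\varepsilon$.

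\textbf{Stage 1 (row selection).} The first inequality above is the row-selection guarantee and does not involve blocks: the rows are drawn individually and uniformly, so the standard relative-error row subset selection argument (the transpose of column subset selection) applies. The only subtlety is that uniform sampling must approximate sampling by the row leverage scores $\|\bU_{A,k}^T\boldsymbol{e}_i\|_2^2$. The incoherence hypothesis $\mu \le \mu_0$ (Definition~\ref{def:incoherence}) bounds each such leverage score by $\mu_0 k/m$, so the uniform probabilities $1/m$ lie within a factor $\mu_0$ of the leverage probabilities; this is exactly what inflates the row count to $r = O(\mu_0 k^2/\varepsilon_1^2)$ and yields $\|\bA - \bA\bR^{\dagger}\bR\|_F \le (1+\varepsilon_1)\|\bA - \bA_k\|_F$ with constant probability.

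\textbf{Stage 2 (column-block regression).} The heart of the argument is the regression inequality $\|\bA - \bC\bU\bR\|_F \le (1+\varepsilon_2)\|\bA - \bA\bR^{\dagger}\bR\|_F$. Let $\bA_{\perp} = \bA - \bA\bR^{\dagger}\bR$ denote the part of $\bA$ orthogonal to the row space of $\bR$, which is spanned by the columns of $\bV_{R,r}$, so that $\bA_{\perp}\bV_{R,r} = \bo$. Following the generalized $\ell_2$-regression analysis, the excess error of the sampled solution is controlled once two conditions hold: (i) rank/subspace preservation, $\bV_{R,r}^T\bS\bS^T\bV_{R,r} \approx \bI_r$, guaranteeing in particular that $\bW = \bR\bS$ keeps full rank $r$ so that $\bC\bU\bR$ genuinely solves the sampled regression; and (ii) smallness of the cross term $\bV_{R,r}^T\bS\bS^T\bA_{\perp}^T$, whose population value $\bV_{R,r}^T\bA_{\perp}^T$ equals $\bo$. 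Both conditions are instances of approximate block multiplication with left factor $\bV_{R,r}^T$, so I would apply Lemma~\ref{lemma4} to each. Because the algorithm samples blocks with probability $p_i = \|\bV_{R,r}^T\bE_i\|_F^2/r$ (Definition~\ref{group}) and $\sum_i\|\bV_{R,r}^T\bE_i\|_F^2 = \|\bV_{R,r}\|_F^2 = r$, the lemma applies with $\beta = 1$ and with the block stable rank specialized to $\bR$, namely $\alpha_R = \min_i \|\bV_{R,r}^T\bE_i\|_F^2/\|\bV_{R,r}^T\bE_i\|_2^2$. The resulting error in (i) is of order $\|\bV_{R,r}\|_F^2/\sqrt{g\alpha_R} = r/\sqrt{g\alpha_R}$; forcing this below a constant multiple of $\varepsilon_2$ is precisely what produces the block-improved count $g = O(r^2/(\alpha_R\varepsilon_2^2))$, replacing the $s$-dependent serialized bound by the $\alpha_R$-dependent one.

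\textbf{Combination, probability, and main obstacle.} Chaining the two stages and rescaling $\varepsilon_1,\varepsilon_2 = \Theta(\varepsilon)$ gives the claimed $(1+\varepsilon)$ bound. The constant success probability $0.7$ follows by taking the failure parameters $\delta$ in each invocation of Lemma~\ref{lemma4} (and in the row-sampling step) to be small constants and applying a union bound; the full $1-\delta$ statement of Theorem~\ref{Thm2} is then recovered by the standard independent-repetition boosting described in Section~\ref{sec:ProofSketch}. I expect Stage~2 to be the main obstacle: one must carry the block structure through the regression analysis so that the bound depends on $\alpha_R$ rather than collapsing to the trivial serialized estimate, and one must ensure that the subspace-preservation condition (i) holds with high enough probability to guarantee $\mathrm{rank}(\bW) = r$, since both the reduction $\bC\bU\bR = \bA\bS(\bR\bS)^{\dagger}\bR$ and the exactness of the regression reformulation rely on it.
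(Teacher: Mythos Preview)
Your proposal is correct and follows essentially the same route as the paper: the identity $\bC\bU\bR=\bA\bS(\bR\bS)^{\dagger}\bR$, the two-stage chaining through $\|\bA-\bA\bR^{\dagger}\bR\|_F$ with incoherence handling the uniform row step, and the block-regression analysis driven by Lemma~\ref{lemma4} applied with left factor $\bV_{R,r}^T$ to obtain both subspace preservation and the cross-term bound, yielding the $\alpha_R$ dependence in $g$. The paper packages Stage~2 as a separate generalized $\ell_2$-regression lemma and carries out the error split via $\Omega=(\bV_{R,r}^T\bS)^{\dagger}-(\bV_{R,r}^T\bS)^T$, but this is exactly the mechanism your conditions (i) and (ii) encode.
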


The proof of Lemma~\ref{Thm3} follows standard techniques in \cite{drineas08} with modifications necessary for block sampling (see Appendix for the proof details). Finally, the result in Theorem \ref{Thm2} follows by applying standard boosting methods to Lemma~\ref{Thm3} and running Algorithm~\ref{groupalgo} $t = \ln(\frac{1}{\delta})$ times. By choosing the solution with minimum error and observing that $0.3 < 1/e$, we have that the relative error bound holds with probability greater than $1 - e^{ - t} = 1 - \delta $.

\textbf{Remark.} As a consequence of Lemma \ref{Thm3}, we show that if enough blocks are sampled with high probability, then $ \| \bA - \bA\bS(\bR\bS)^{\dagger}\bR\|_F $ $ \leq $ $ (1 + \varepsilon ) \| \bA - \bA\bR^{\dagger}\bR\|_F$. 
This gives a guarantee on the approximate solution obtained by solving a block-sampled regression problem $ \min_{\bX \in \R^{m \times r}}\| (\bA\bS) - \bX(\bR\bS)\|_F$ instead of the entire least squares problem.  As a special case of the above result, when $\bR = \bA$ we get a bound for the block column subset selection problem. If $g $  $ = $ $ \mathcal{O}(\frac{ k^2}{\alpha_A \varepsilon^2} $ $\log\left(\frac{1}{\delta}\right))$ blocks are chosen, then with probability at least $1 - \delta$ we have $ \| \bA - \bC\bC^{\dagger}\bA\|_F $ $ \leq $ $ (1 + \varepsilon ) \| \bA - \bA_k\|_F. $

\section{Experiments}

\subsection{Experiments with biometric data}
\label{sec:eda_experiments}

One emerging application is audience reaction analysis of video content using biometrics.  Specifically, users watch video content while wearing sensors, with changes in biometric sensors indicating changes in reaction to the content.  For example, increases in heart rate or a spike in electrodermal activity indicate an increase in content engagement.  In prior work, biometric signal analysis techniques have been developed to determine valence~\cite{silveira2013predicting} ({\em e.g.,} positive vs. negative reactions to films) and content segmentation~\cite{lian2014}.  Unfortunately these experiments require a large number of users to sit through the entire video content, which can be both costly and time-consuming.  

We consider the observed biometric signals as a matrix with $m$ users (as rows) and $n$ biometric time samples (as columns).  Matrix approximation techniques, such as CUR decomposition, point to the ability to infer the complete matrix by showing the entire content to only a subset of users ({\em i.e.,} rows), while the remaining users see only selected scenes of the content ({\em i.e.,} column blocks).  To replicate a user's true reaction to content, individual columns cannot be sampled ({\em e.g.,} showing the user 0.25 seconds of video content) given the lack of scene context. Instead, longer scenes must be shown to the user to gather a representative response.  Therefore, the Block CUR decomposition proposed in this paper is directly applicable.  

The biometric experiment setup is as follows.  We attached $24$ subjects with the Empatica E3 wearable sensor~\cite{empatica} that measures electro-dermal activity (EDA) at $4$ Hz.  The subjects were shown a $41$-minute episode of the television series ``NCIS'', in the genres of action and crime.  The resulting biometric data matrix was $24 \times 9929$.  Our goal is to use Block CUR decomposition to show only a subset of users the entire content, and to then impute the biometric data for users that have viewed only a small number of selected scenes from the content.
 \begin{figure}[tp]
	\centering
%	\begin{tabularx}{0.75\textwidth}{cc}
	\subfloat[]{
		\label{fig:EDA}
	    \includegraphics[width=0.4\textwidth]{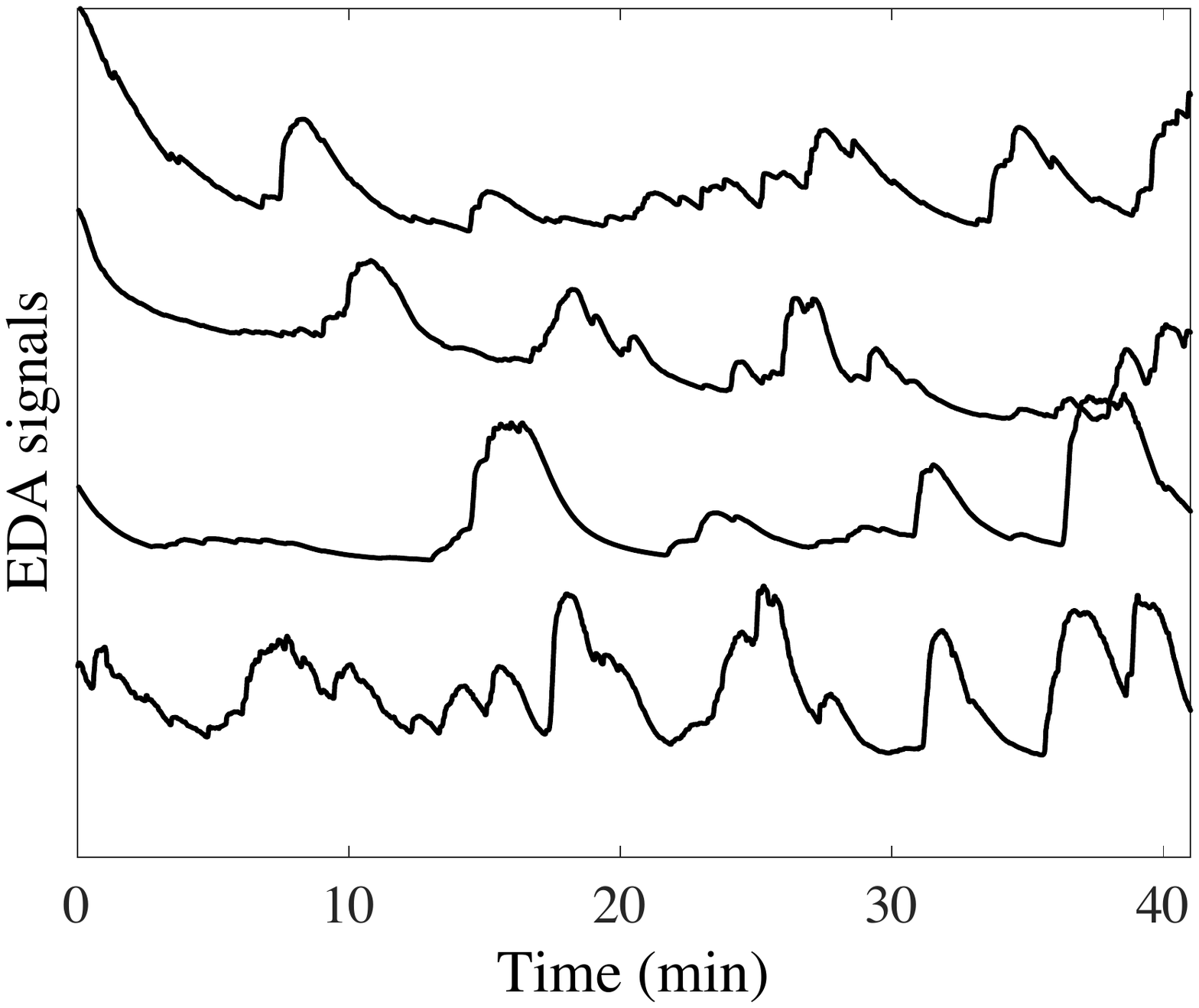}
	}
	\qquad
	\subfloat[]{
		\label{fig:svdfro}
		\includegraphics[width=0.43\textwidth]{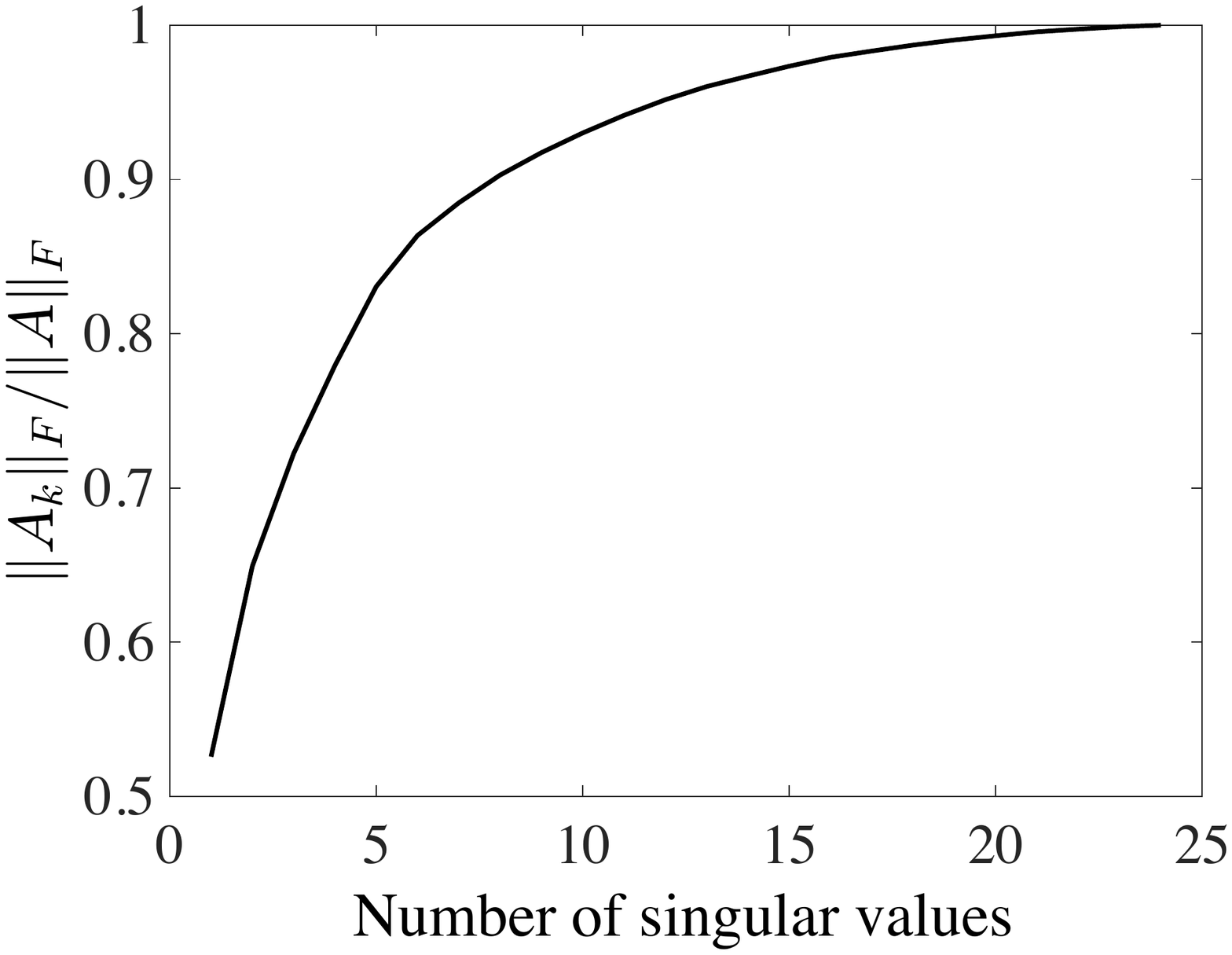}
	}
%		(a) &  (b)
%	\end{tabularx}
	\caption[]{Panel \subref{fig:EDA} shows EDA data for four users watching the NCIS video and \subref{fig:svdfro} demonstrates the low rank nature of $\bA$ }
\end{figure}

\begin{figure}[tp]

 \center

  \includegraphics[width=0.45\textwidth]{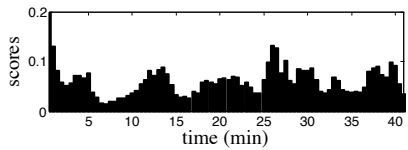}

  \caption{Block leverage scores for EDA data with k = 5 and s = 120 columns (30 seconds).}

  \label{Fig:leverage}

\end{figure}

\begin{figure}[tp]
	\centering
%	\begin{tabularx}{0.75\textwidth}{cc}
	\subfloat[$s=60$ using $\bU$]{
	          \includegraphics[width=0.4\textwidth]{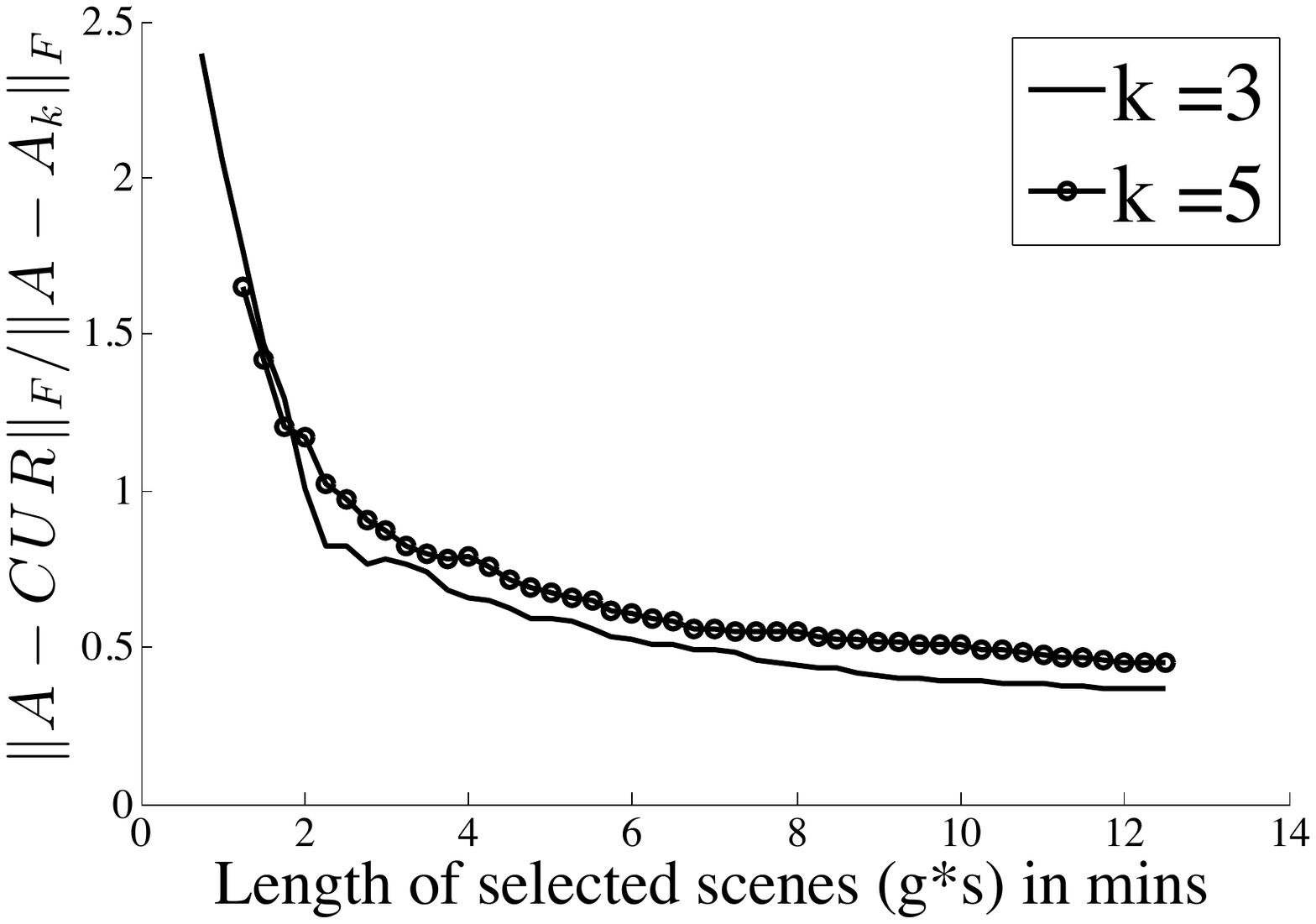}
	}
	\quad
	\subfloat[$s=60$ using $\bU_k$]{
		\includegraphics[width=0.4\textwidth]{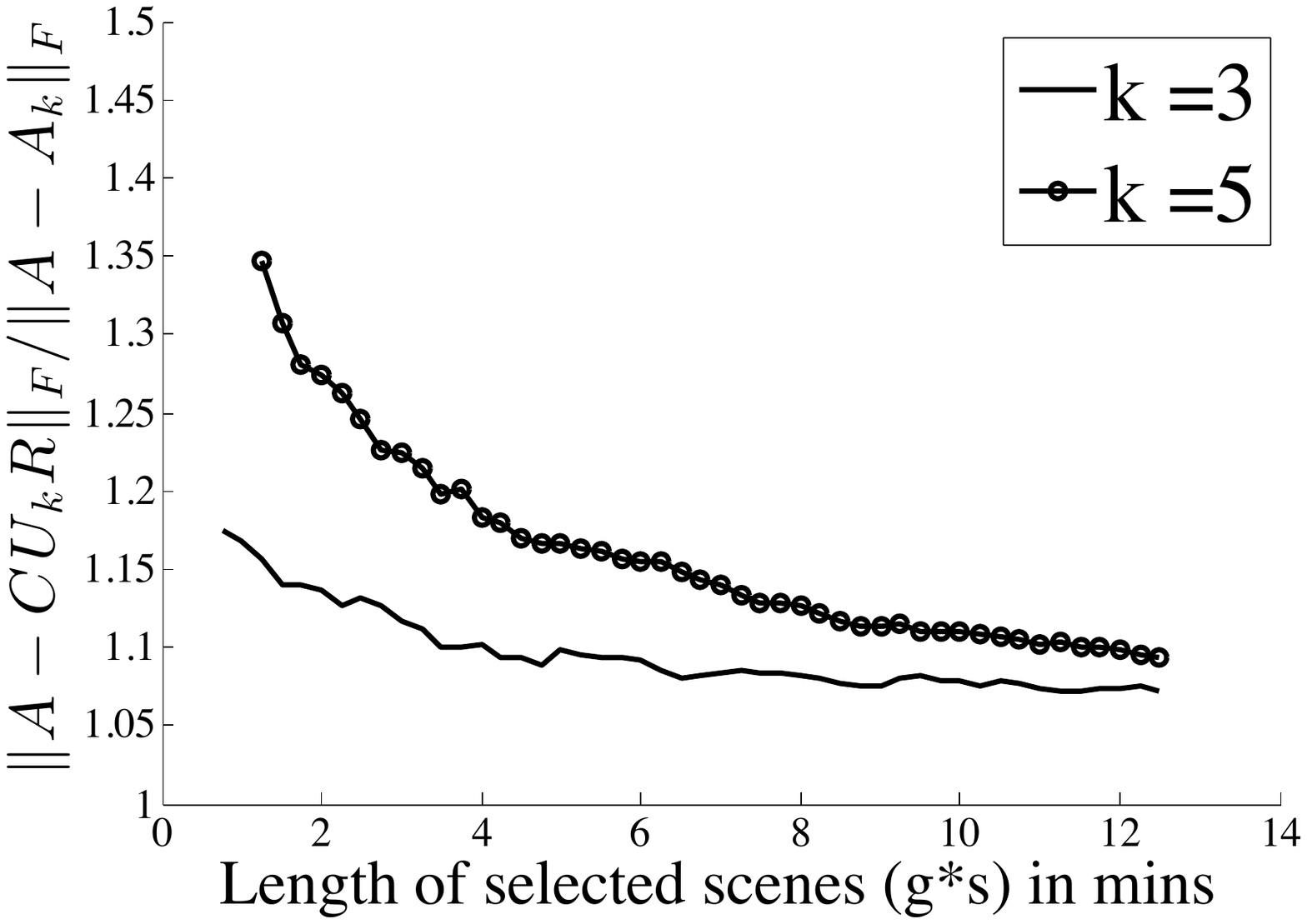}
	}
	\quad
	\subfloat[$s=120$ using $\bU$]{
%		(a) &  (b)\\
		\includegraphics[width=0.4\textwidth]{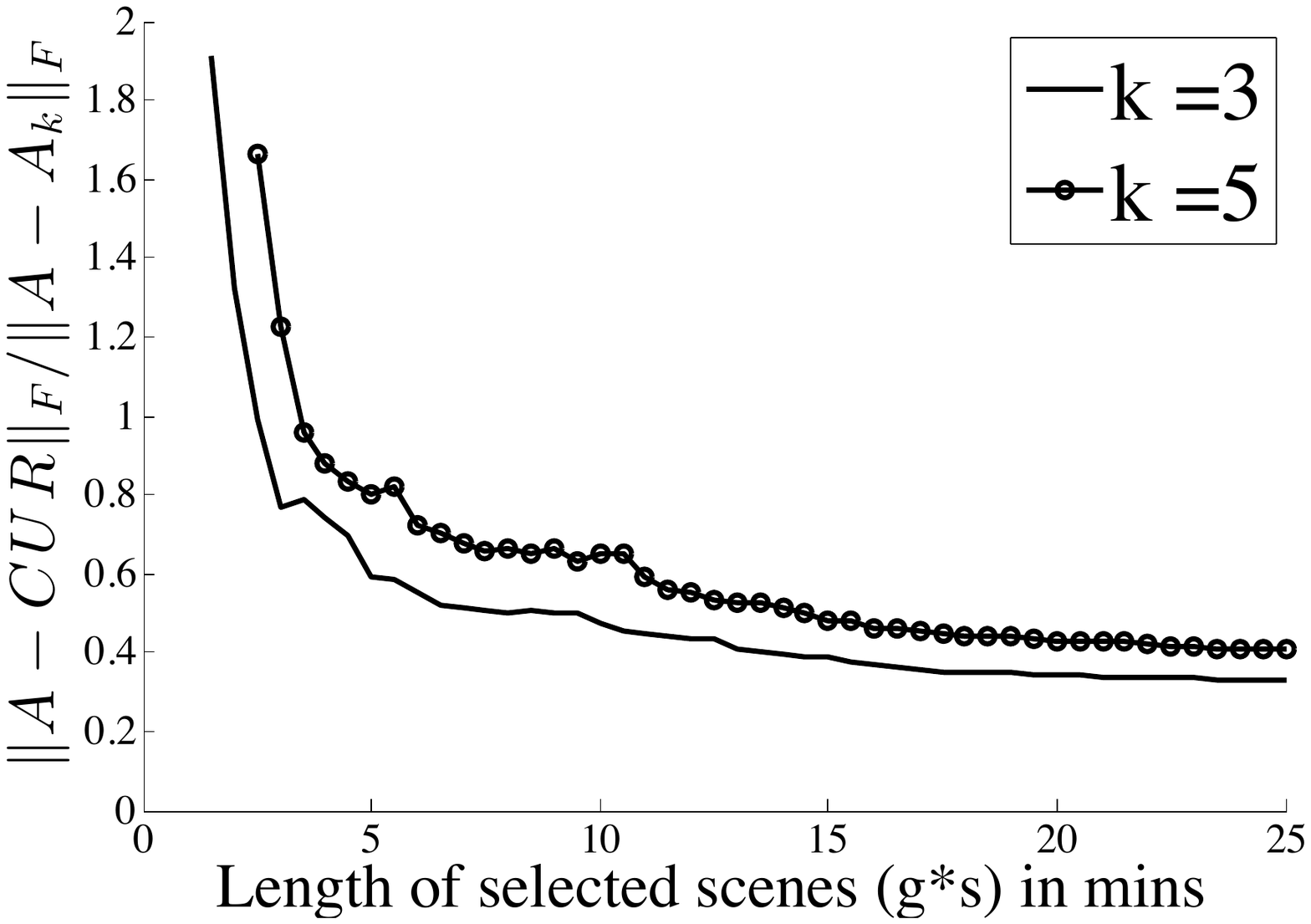}
	}
	\quad
	\subfloat[$s=120$ using $\bU_k$]{
	         \includegraphics[width=0.4\textwidth]{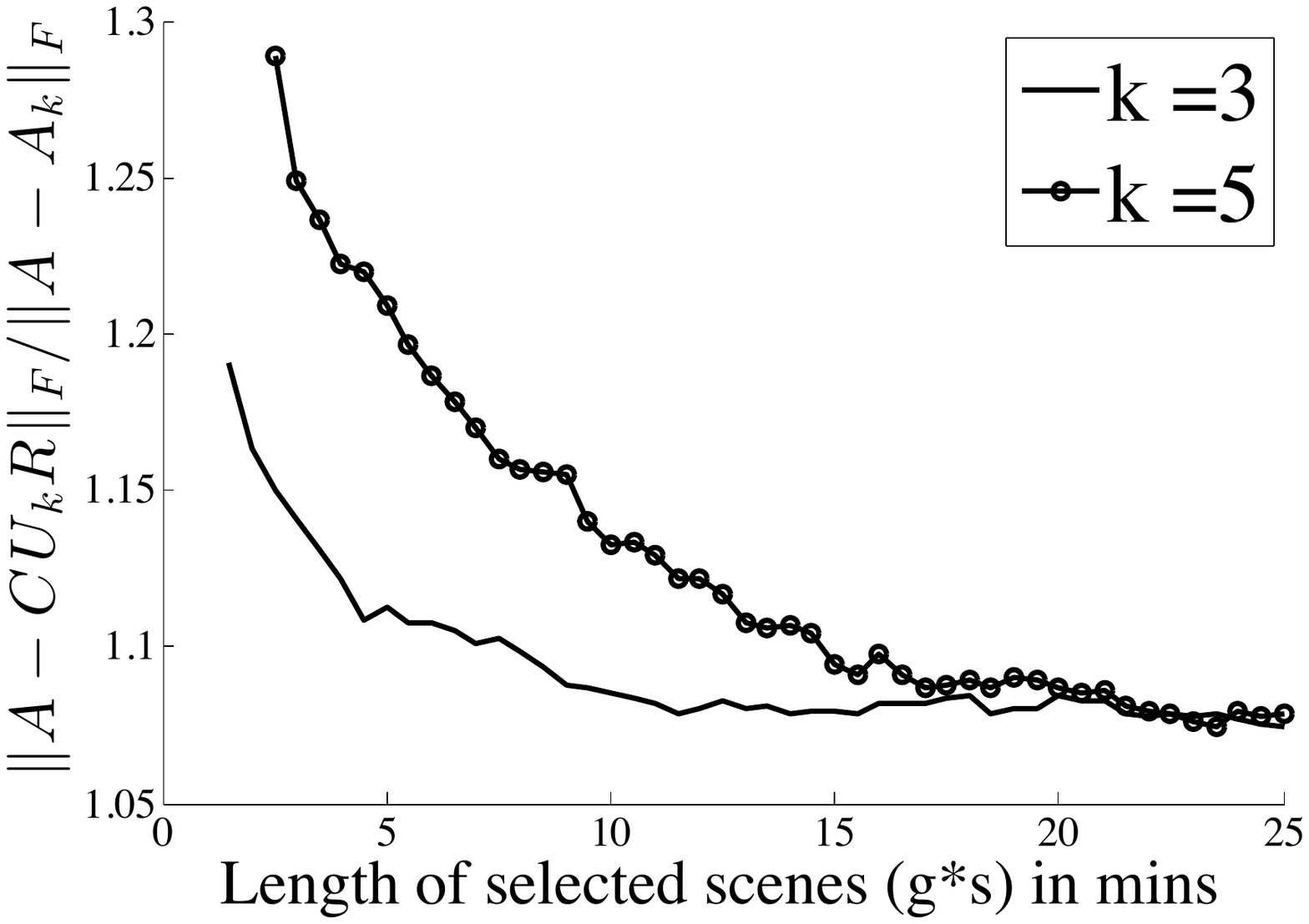}
	}
%		(c) &  (d) \\
%	\end{tabularx}
	\caption{Error plots for two values of target rank, $k = 3,5$.}
	\label{Fig:error}
\end{figure}

\textbf{Results.} We refer to the biometric data matrix as $\bA$ and plot the EDA traces (rows) corresponding to four users in Figure \ref{fig:EDA}.  To demonstrate the low rank nature of the data, we plot the Frobenius norm of $\bA$ covered by $\bA_k$ as a function of $k$ in Figure \ref{fig:svdfro}. We find that for this data, only $5$ singular vectors are needed to capture $80\%$ of the total Frobenius norm of the complete matrix. Next, we segment the columns of this matrix into blocks such that $s = 120$ columns (or $30$  seconds).  In Figure \ref{Fig:leverage}, we show the computed block leverage scores. The leverage scores seem to suggest that certain scenes are more important than others. For example, the highest leverage scores are around the 12, 26, and 38 minute marks. This corresponds to scenes of a dead body, unveiling of a clue to solving the mystery, and the final arrest, respectively.  

Using Algorithm \ref{groupalgo}, we uniformly sample EDA traces (rows) of 20 users and hold out the EDA traces of 4 users. We sample column blocks and plot the resulting error in Frobenius norm in Figure \ref{Fig:error}.  The plots show the normalized Frobenius norm error of the CUR approximation as a function of the number of blocks, $g$, sampled. More precisely, the ratio $\|\bA - \bC\bU\bR\|_F/\|\bA - \bA_k\|_F$ and $\|\bA - \bC\bU_k\bR\|_F/\|\bA - \bA_k\|_F$ are plotted for two values of the target rank, $k = 3$ and $5$ and two values of block size, $s = 60$ and $120$ columns per block ($15$ and $30$ seconds), respectively. We also compare the error using $\bU_k$, the rank-$k$ approximation of $\bU$, which leads to an exactly rank-$k$ matrix approximation since this may be a restriction in some applications. We repeat Algorithm \ref{groupalgo} ten times\footnote{These plots were generated using \textit{sampling without replacement} even though our theory supports \textit{sampling with replacement} since sampling the same blocks is inefficient in practice.} and plot the mean normalized error over 10 trials. 

The error drops sharply as we sample more blocks but quickly flattens demonstrating that a summary of the movie could suffice to approximate the full responses. The plots also show the interplay between the number of blocks sampled and the issue of context which is related to block size. To give the viewer some context we would want to make the scene as long as possible but we want to show them only a summary of the content to reduce the cost. These conflicting aims result in a trade-off of block size and the number of blocks sampled. For example, for $k = 5$, the normalized error is less than $1$ when a $2.5$ minute long clip is shown to the viewer, that is $g = 10$ with block size $s = 60$ columns (or $15$ seconds), whereas the normalized error is less than $1$ when a $3.5$ minute long clip is shown to the viewer ($g = 7$) with block size $s = 120$ columns (or $30$ seconds).  These results demonstrate the practical use of the Block CUR algorithm.

\subsection{Distributed experiments}
\label{sec:experiments}

In this section we demonstrate empirically that the proposed block sampling based CUR algorithms can achieve a significant speed-up when used to decompose matrices in a distributed data setting by comparing their performance with individual column sampling based traditional CUR algorithms on both synthetic and real-world data. We report the relative-error of the decomposition ({\em i.e.,} $\|\bA - \bC\bU\bR\|_F/\|\bA\|_F $) and the sampling time of each algorithm on different data-sets. 

\begin{table}[t]
\centering
\caption{Table comparing the number of sampling operations needed for given $\varepsilon$ using our Block CUR result based on block sampling and traditional CUR based on individual column sampling (note this is not the same as the vectorized block columns in Table 1). This leads to speedup since it is more efficient to retrieve predefined blocks than querying individual rows or columns in these regimes. The $\alpha_R$ term we introduce satisfies the bound $ 1 \le \alpha_R  \le s$.}
\begin{tabular}{ c c }
\hline
Method  & No. of sampling ops.  \\ \hline 
Traditional CUR  &  $\mathcal{O} \left( \frac{k^2}{\varepsilon^{2}}\log(\frac{1}{\delta}) + \frac{k^4}{\varepsilon^{6}} \log^3(\frac{1}{\delta})\right)$\\  
Block CUR  & $\mathcal{O}\left(  \frac{k^2}{\varepsilon^{2}}\log(\frac{1}{\delta}) + \frac{  k^4}{\boldsymbol{\alpha_R} \varepsilon^{6}}\log^3(\frac{1}{\delta})\right)$\\
 \hline   
\end{tabular}
\label{Tab:Latency}
\end{table}

We implemented the algorithms in Scala 2.10 and Apache Spark 2.11 on Amazon Elastic Map-Reduce (Amazon EMR). The compute cluster was constructed using four Amazon m4.4xlarge instances, with each compute node having 64 GB of RAM. Using Spark, we store the data sets as resilient distributed dataset (RDD), a collection of elements partitioned across the nodes of the cluster (see Figure~\ref{fig:blockCUR}). In other words, Spark partitions the data into many blocks and distributes these blocks across multiple nodes in the cluster. Using block sampling, we can approximate the matrix by sampling only a subset of the important blocks.  Meanwhile, individual column sampling would require looking up all the partitions containing specific columns of interest as shown in Table \ref{Tab:Latency}. Our experiments examine the runtime speed-up from our block sampling CUR that exploits the partitioning of data.

\textbf{Synthetic experiments.}
The synthetic data is generated by $\bA = \bU\bV$ where $\bU \in \R^{m \times k}$ and $\bV \in \R^{k \times n}$ are random matrices with i.i.d. Gaussian random entries, resulting in a low rank matrix $\bA$.  We perform CUR decomposition on matrices of size $m \times n$ with $m=n$, target rank $k$, and number of blocks $G$ (set here across all experiments to be $100$). The leverage scores are calculated by computing the SVD of the rows sampled uniformly with $\bR \in \R^{r \times n}$. We sample one-sixth of the rows. 

\begin{figure}[t]
\centering
	
	% \includegraphics[width=0.35\textwidth]{distributed1.png} \\
	 %(a) \\
%	\begin{subfigure}[c]{0.32\textwidth} 
	\subfloat[$n = 8000$]{
	  \includegraphics[width=0.31\textwidth]{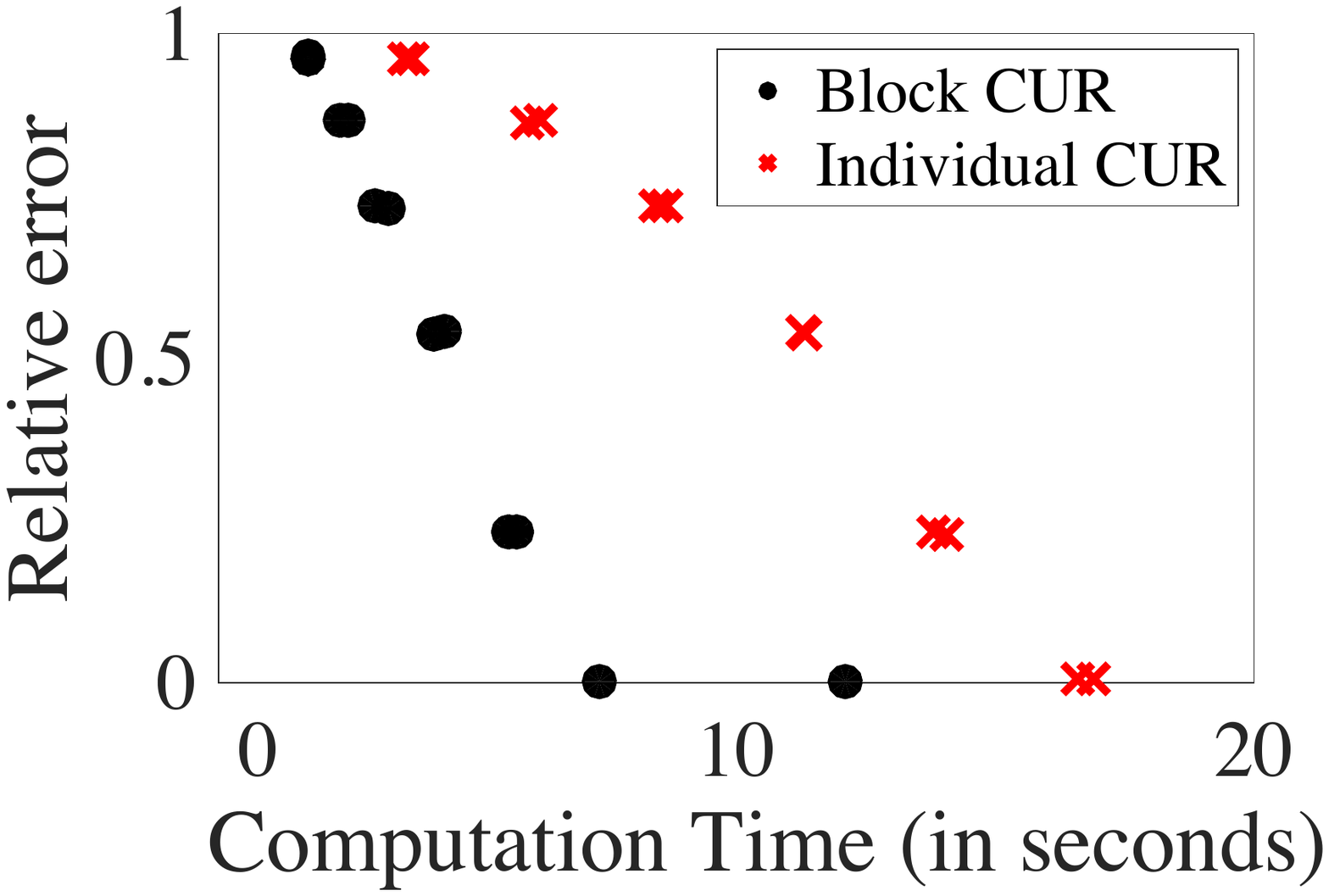} 
	}
	\,
%	  \caption{$n = 8000$}
%	\end{subfigure}
% 	\begin{subfigure}[c]{0.32\textwidth}
	\subfloat[$n = 12000$]{
	  \includegraphics[width=0.31\textwidth]{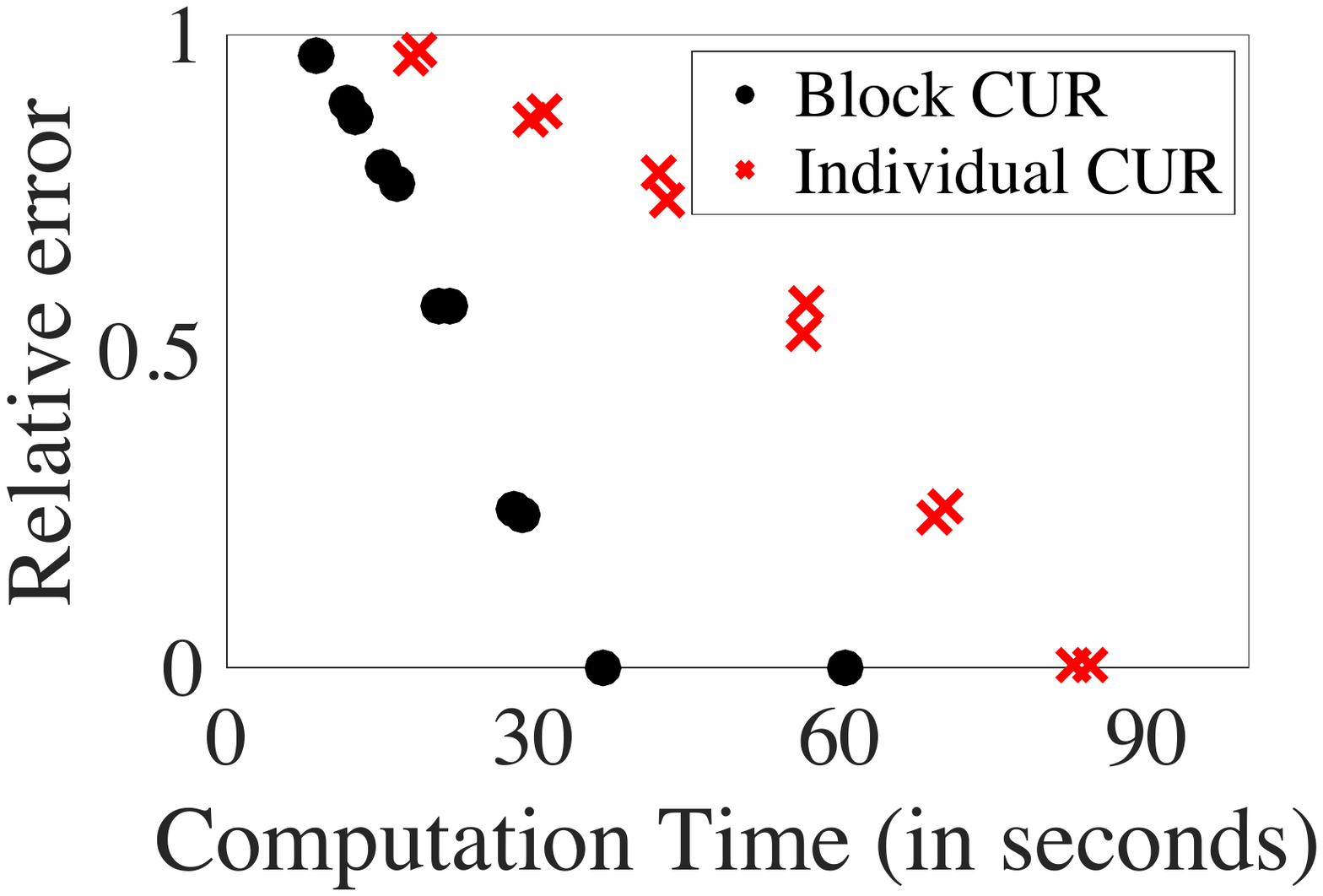}
	}
	\,
%	  \caption{$n = 12000$}
%	  \end{subfigure}
	% \includegraphics[width=0.35\textwidth]{distributed1.png} \\
	 %(a) \\
%	\begin{subfigure}[c]{0.32\textwidth} 
	\subfloat[$n = 20000$]{
	  \includegraphics[width=0.31\textwidth]{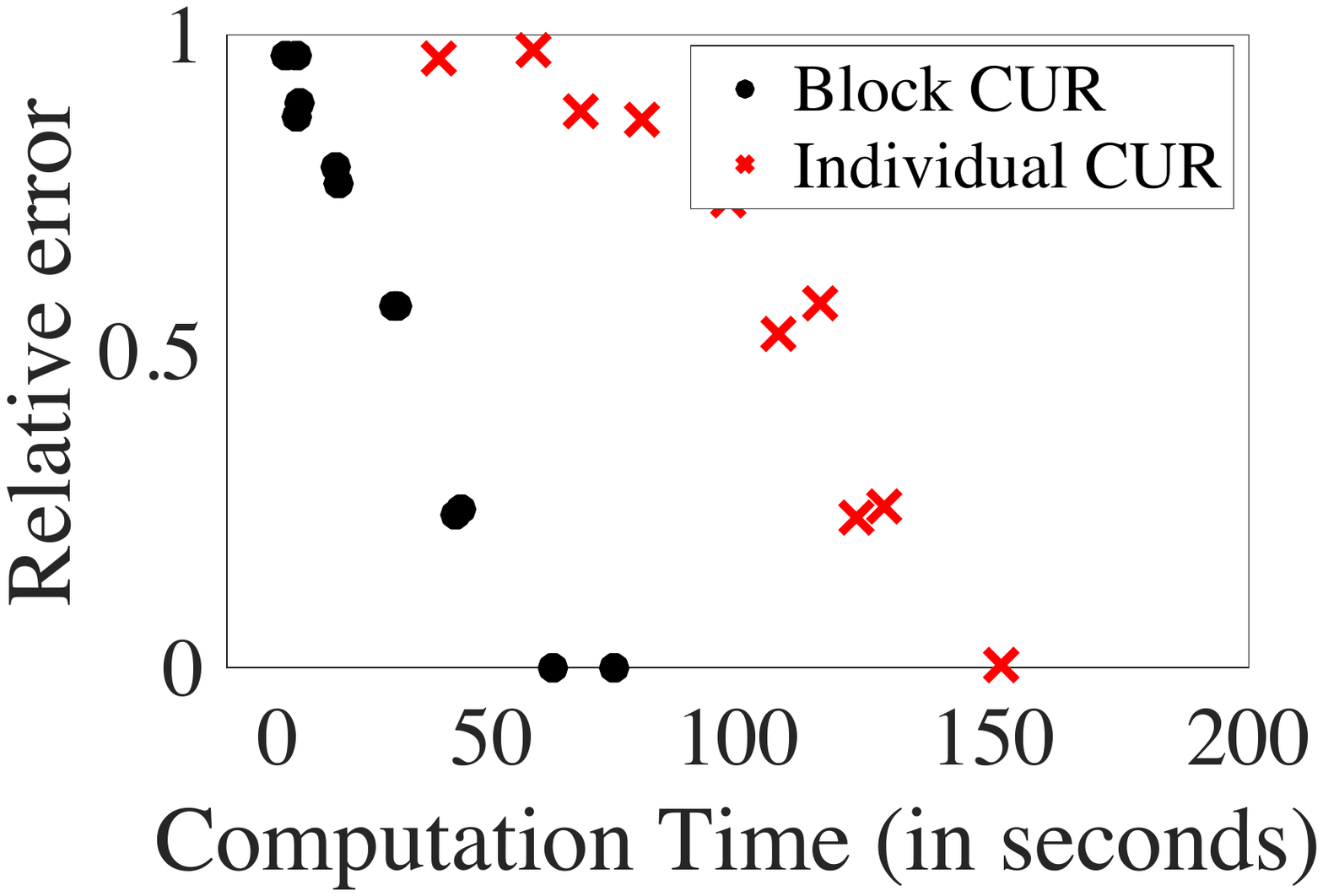}
	}
%	  \caption{$n = 20000$}
%	\end{subfigure}
 	
\caption{Performance on synthetic $n \times n$ matrices with rank $n/10$.}
 \label{Fig:synthetic}
\end{figure}

Figure \ref{Fig:synthetic} shows the plots for relative error achieved with respect to the runtime required to sample $\bC$ and $\bR$ matrices for both Block CUR and traditional CUR algorithms. To focus on the speed-up achieved by taking into account the block storage of data we compare running times of only the sampling operations of the algorithms (which excludes the time required to compute the SVD). We note that other steps in both algorithms can be updated to include faster variants such as the approximation of leverage scores by sketching or sampling \cite{approxLevScores}. We vary $g$, the number of blocks chosen, from $1$ to $6$. The number of columns chosen is thus $c=gs$, where $s$ denotes the number of columns in a block and varies from $50$ to $200$. We repeat each algorithm (Block CUR and traditional CUR) twice for the specified number of columns, with each realization as a point in the plot. The proposed Block CUR algorithm samples the $c$ columns in $g$ blocks, while traditional CUR algorithm samples the $c$ columns one at a time.

Consistently, these results show that block sampling achieves the relative error much faster than the individual column sampling -- with performance gains increasing as the size of the matrix grows, as shown in Figure \ref{Fig:synthetic}. While the same amount of data is being transmitted regardless of whether block or individual column sampling is used, block sampling is much faster because it needs to contact fewer executors to retrieve blocks of columns rather than the same number of columns individually. In the worst case, sampling individual columns may need to communicate with all of the executors, while block sampling only needs to communicate with $g$ executors. Thus, by exploiting the partitioning of the data, the Block CUR approach is able to achieve roughly the same quality of approximation as traditional column-based CUR, as measured by relative error, with significantly less computation time.

\begin{figure}[t]
	\centering
%	\begin{tabularx}{0.95\textwidth}{cc}
	\subfloat[]{
		\label{fig:arcene_speedup}
	    \includegraphics[width=0.47\textwidth]{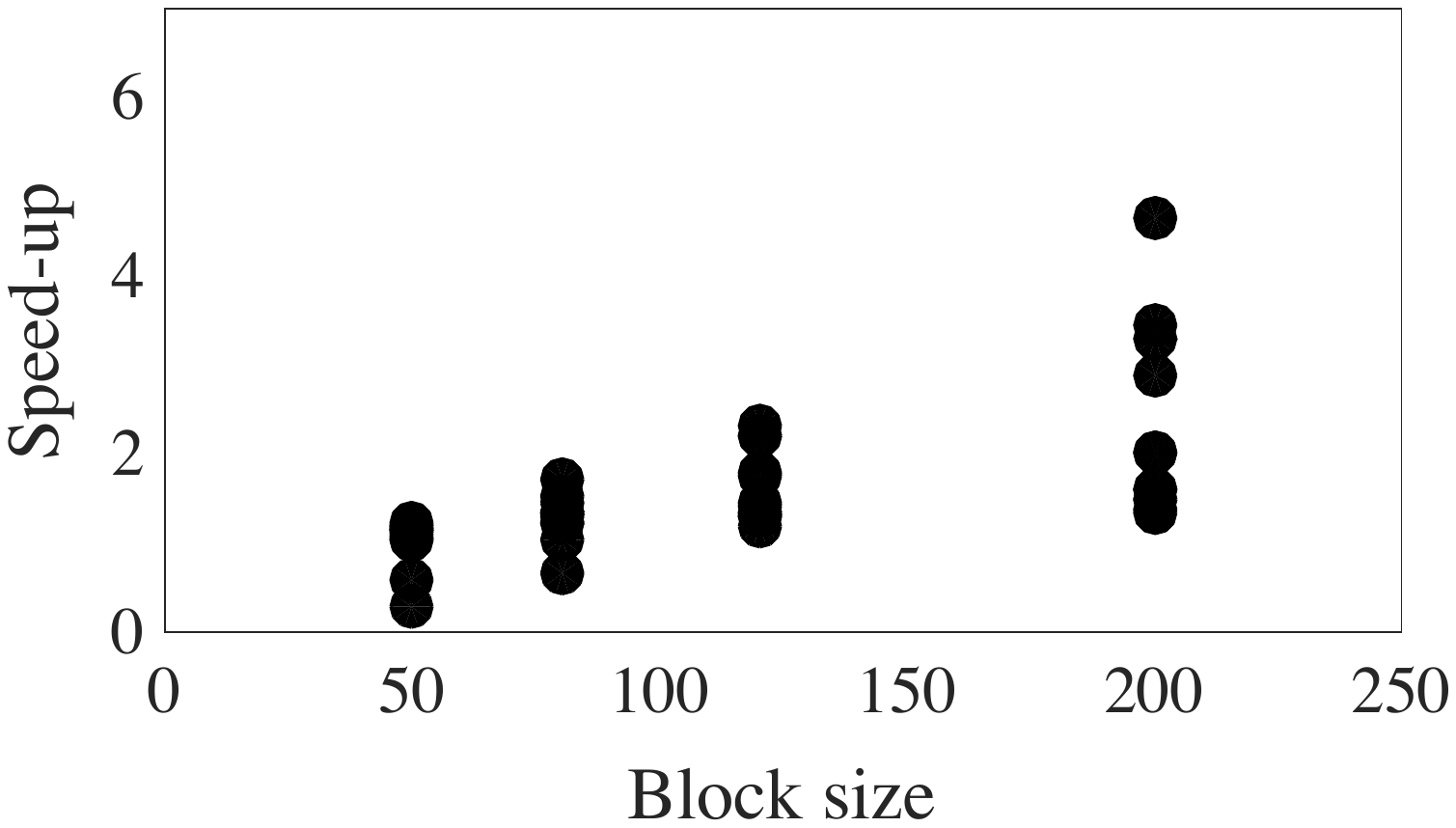}
	}
	\,
	\subfloat[]{
		\label{fig:arcene_error}
		\includegraphics[width=0.47\textwidth]{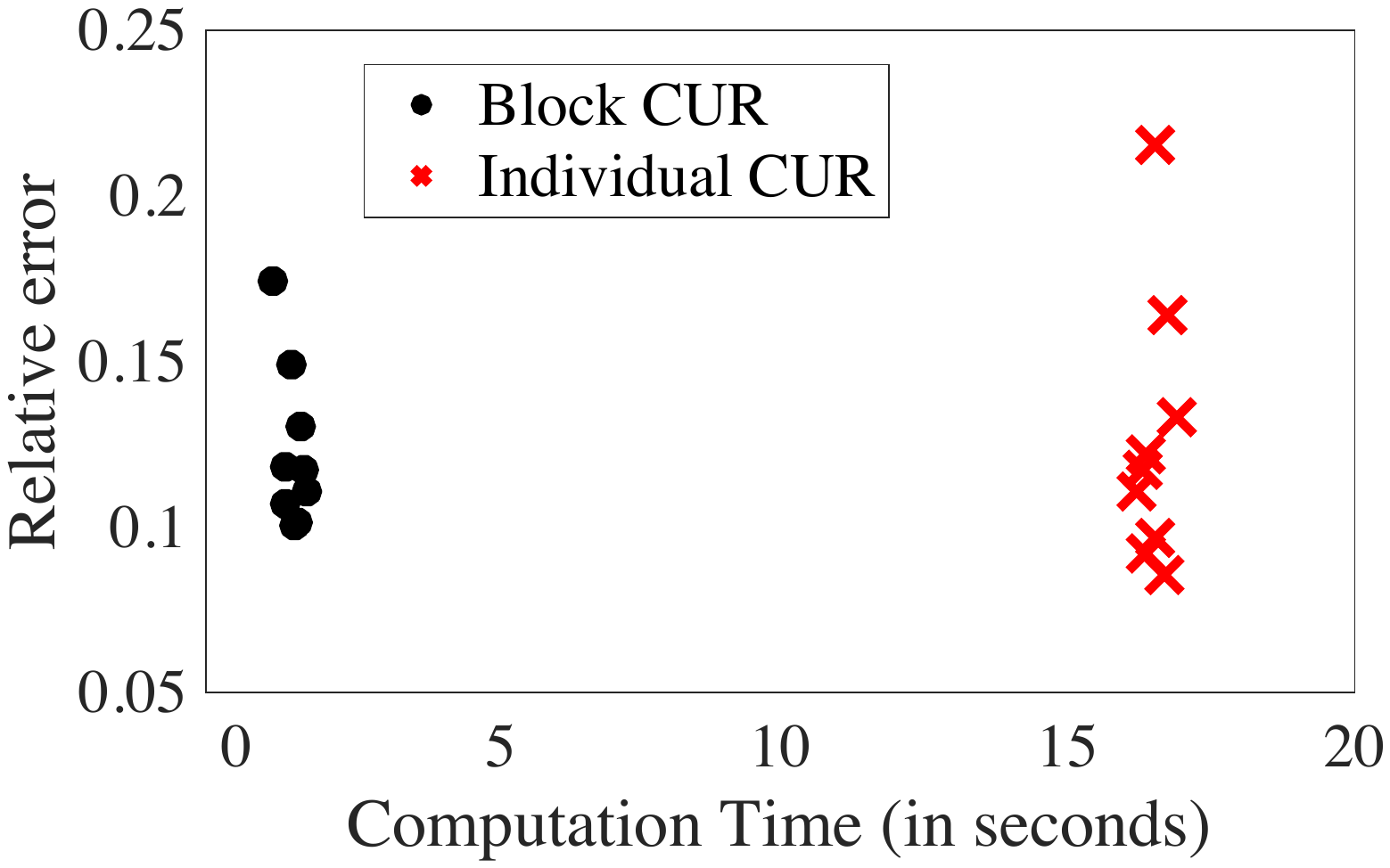}
	}
%		(a) &  (b)
%	\end{tabularx}
	\caption[]{Performance on $900 \times 10,000$ Arcene dataset with block size 12. \subref{fig:arcene_speedup} Runtime speed-up from block sampling compared to individual column sampling for varying block sizes. \subref{fig:arcene_error} Block CUR achieves similar relative errors as individual CUR with much lower computation time.}
	\label{fig:arcene}
\end{figure}

\textbf{Real-world experiments.} We also conduct experiments on the Arcene dataset \cite{arcene} which has 900 rows and 10,000 columns. We compare the running time for both block and traditional CUR decomposition.  We again find consistent improvements for the block-wise approach compared with individual column sampling. With block size $s = 12$, sampling up to 10 groups led to an average speed up of 11.2 over individual column sampling, as shown in Figure \ref{fig:arcene}. The matrix is very low rank, and sampling a few groups gave small relative errors.

\section{Conclusion}
\label{sec:conclude}

In this paper we extended the problem of CUR matrix decomposition to the block setting which is naturally relevant to distributed storage systems and biometric data analysis. We proposed a novel algorithm and derived its performance bounds. We demonstrated its practical utility on real-world distributed storage systems and audience analytics. Some possible future directions for this work include calculating the leverage scores quickly or adaptively, and considering the algorithms and error bounds when the matrix has a pre-specified structure like sparsity. 

\bibliographystyle{abbrv}
\bibliography{root}

\appendix
%\newpage{}
\section{Supplementary Material for Block CUR: Decomposing Matrices using Groups of Columns}
\label{sec:theory}

As stated in Section~\ref{sec:ProofSketch}, the result in Theorem \ref{Thm2} follows by applying standard boosting methods to Lemma~\ref{Thm3} and running Algorithm 1 $t = \ln(\frac{1}{\delta})$ times. By choosing the solution with minimum error and observing that $0.3 < 1/e$, we have that the relative error bound holds with probability greater than $1 - e^{ - t} = 1 - \delta $.  Hence, it suffices to prove Lemma~\ref{Thm3} to prove the main result. 

\subsection{Proof of Lemma ~\ref{Thm3}}

First, note $\bU = (\bR\bS)^{\dagger}$ and $\bC = \bA\bS$. 
\begin{align*}
\|\bA - \bC\bU\bR\|_F &= \|\bA - \bA\bS (\bR\bS)^{\dagger}\bR\|_F 
\end{align*}

Recall that $\bR \in \R^{r \times n}$ has rank no greater than $r$;  $\bA \in \R^{m \times n}; \varepsilon \in (0,1)$; and that the same column blocks from $\bR$ and $\bA$ are picked with the following probability distribution:
$$ p_i = \frac{\|\bV_{R,r}^T \bE_i\|_F^2}{r}, \ \forall i \in [G]. $$ 

We can use Lemma~\ref{intermediateresult} (stated and proved in next section) with probability at least 0.85 we have
$$ \| \bA - \bA\bS(\bR\bS)^{\dagger}\bR\|_F \leq (1 + \varepsilon ) \| \bA - \bA\bR^{\dagger}\bR\|_F. $$

Next,  we bound $\|\bA - \bA\bR^{\dagger}\bR \|_F$. Since $\bA$ has incoherent column space, the uniform sampling distribution $p_j = 1/m$ satisfies eqn. (13) in \cite{drineas08} with $\beta = 1/\mu_0$. Consequently, we can apply modified version of Theorem 1 in \cite{drineas08} we get with probability at least 0.85,
$\|\bA - \bA\bR^{\dagger}\bR \|_F \leq (1 + \varepsilon) \| \bA - \bA_k\|_F$. Finally, we get with probability 0.7,
\begin{align*}
\|\bA - \bC\bU\bR\|_F & \leq (1 + \varepsilon' )^2 \| \bA - \bA_k \|_F,&\\
& \leq (1 + \varepsilon'' ) \| \bA - \bA_k \|_F,&\textnormal{ letting } \varepsilon'' = 3\varepsilon'.
\end{align*}
This completes the proof of Lemma 1.

\subsubsection{Approximating generalized $\ell_2$ regression in the block setting}\

In this section, we give theory for generalized least squares using block subset selection that is used to prove the main results for the algorithms but applies to arbitrary matrices $\bA$ and $\bB$.  Given matrices $\bA \in \R^{m \times n}$ and  $\bB \in \R^{r \times n}$, the generalized least squares problem is
$$ \min_{\bX \in \R^{m \times r}}\| \bA - \bX\bB\|_F.$$
It is well-known that the solution to this optimization problem is given by $ \widehat{\bX} = \bA\bB^{\dagger}$. To approximate this problem by a subsampled problem, we sample some blocks of columns from $\bA$ and $\bB$ to approximate the standard $\ell_2$ regression by the following optimization: 
$$ \min_{\bX \in \R^{m \times r}}\| (\bA\bS) - \bX(\bB\bS)\|_F.$$
The solution of this problem is given by $ \tilde{\bX} = \bA\bS(\bB\bS)^{\dagger}$. In the following lemma, we give a guarantee stating that, when enough blocks are sampled with the specified probability, the approximate solution is close to the actual solution to the $\ell_2$ regression.  

\begin{lemma}\label{intermediateresult}
	Suppose $\bB \in \R^{r \times n}$ has rank no greater than $k$;  $\bA \in \R^{m \times n}; \varepsilon,\delta \in (0,1)$; and let the same column blocks from $\bB$ and $\bA$ be picked with the following probability distribution:
	$$ p_i = \frac{\|(\bV_{B,k})_{(i)}\|_F^2}{k}, \ \forall i \in [G]. $$ 
	
	If $g =  \mathcal{O}(\frac{ k^2}{\alpha_B \delta^4\varepsilon^2})$ blocks are chosen, then with probability at least $1 - \delta$ we have
	$$ \| \bA - \bA\bS(\bB\bS)^{\dagger}\bB\|_F \leq (1 + \varepsilon ) \| \bA - \bA\bB^{\dagger}\bB\|_F. $$
\end{lemma}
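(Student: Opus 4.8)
The plan is to follow the template of sampling-based $\ell_2$ regression, but to replace every invocation of approximate matrix multiplication by its block analogue, Lemma~\ref{lemma4}, so that the block stable rank $\alpha_B$ enters the sample complexity. Write $\bV := \bV_{B,k}$, so that $\bB = \bU_B\bSig_B\bV^T$ and $\bB^{\dagger}\bB = \bV\bV^T$, and let $\boldsymbol{\Delta} := \bA - \bA\bB^{\dagger}\bB = \bA(\bI - \bV\bV^T)$ denote the optimal residual, which is exactly the quantity on the right-hand side. The first step is to note that the rows of $\boldsymbol{\Delta}$ are orthogonal to the row space of $\bB$ (indeed $\bB\boldsymbol{\Delta}^T = \bo$ since $\bB\bV\bV^T = \bB$), while for any $\bX$ the rows of $(\bX - \widehat{\bX})\bB$ lie in that row space. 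With $\widehat{\bX} = \bA\bB^{\dagger}$ and $\widetilde{\bX} = \bA\bS(\bB\bS)^{\dagger}$ this gives the Pythagorean identity
\[ \|\bA - \widetilde{\bX}\bB\|_F^2 = \|\boldsymbol{\Delta}\|_F^2 + \|(\widetilde{\bX} - \widehat{\bX})\bB\|_F^2 . \]
Since $\sqrt{1+\varepsilon^2}\le 1+\varepsilon$, it then suffices to prove $\|(\widetilde{\bX} - \widehat{\bX})\bB\|_F \le \varepsilon\|\boldsymbol{\Delta}\|_F$.

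Next I would reduce this cross term to a form controllable by block multiplication. Set $\bW := \bV^T\bS$. Provided $\bW$ has full row rank, one has $(\bB\bS)^{\dagger} = \bW^{\dagger}\bSig_B^{-1}\bU_B^T$ and $\bW\bW^{\dagger} = \bI$, and a short computation using $\bA\bS = \bA\bV\bW + \boldsymbol{\Delta}\bS$ gives $(\widetilde{\bX} - \widehat{\bX})\bB = \boldsymbol{\Delta}\bS\bW^{\dagger}\bV^T$. Using $\bW^{\dagger} = \bS^T\bV(\bW\bW^T)^{-1}$ and the fact that $\bV^T$ has orthonormal rows, this yields
\[ \|(\widetilde{\bX} - \widehat{\bX})\bB\|_F = \|\boldsymbol{\Delta}\bS\bS^T\bV(\bW\bW^T)^{-1}\|_F \le \|(\bW\bW^T)^{-1}\|_2\,\|\boldsymbol{\Delta}\bS\bS^T\bV\|_F , \]
so the two quantities left to bound are (i) $\|(\bW\bW^T)^{-1}\|_2 = 1/\sigma_{\min}(\bW)^2$ and (ii) the cross product $\|\boldsymbol{\Delta}\bS\bS^T\bV\|_F$.

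Both are instances of Lemma~\ref{lemma4}, and this is where the block structure does its work. For (i), apply the lemma to the pair $(\bV^T,\bV)$: the exact product is $\bV^T\bV = \bI$, its sampled version is $\bV^T\bS\bS^T\bV = \bW\bW^T$, and because the prescribed probabilities are $p_i = \|(\bV_{B,k})_{(i)}\|_F^2/k = \|(\bV^T)^{(i)}\|_F^2/\|\bV^T\|_F^2$ they are exactly the optimal block-multiplication probabilities (so $\beta = 1$), while $\min_i \|(\bV^T)^{(i)}\|_F^2/\|(\bV^T)^{(i)}\|_2^2 = \alpha_B$. Thus $\|\bI - \bW\bW^T\|_F \le k/(\delta_1\sqrt{g\alpha_B}) \le 1/2$, giving $\sigma_{\min}(\bW)^2 \ge 1/2$ and $\|(\bW\bW^T)^{-1}\|_2 \le 2$ once $g = \Omega(k^2/(\alpha_B\delta_1^2))$. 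For (ii), apply the lemma to $(\bV^T,\boldsymbol{\Delta}^T)$; since $\boldsymbol{\Delta}\bV = \bo$ the exact product vanishes, so $\|\boldsymbol{\Delta}\bS\bS^T\bV\|_F \le \sqrt{k}\,\|\boldsymbol{\Delta}\|_F/(\delta_2\sqrt{g\alpha_B}) \le (\varepsilon/2)\|\boldsymbol{\Delta}\|_F$ once $g = \Omega(k/(\alpha_B\delta_2^2\varepsilon^2))$. Substituting into the displayed bound gives $\|(\widetilde{\bX} - \widehat{\bX})\bB\|_F \le \varepsilon\|\boldsymbol{\Delta}\|_F$; a union bound over the two failure events (tuned through $\delta_1,\delta_2$) gives overall probability $\ge 1-\delta$, and collecting the requirements on $g$, together with the two $1/\delta_i$ factors coming from Lemma~\ref{lemma4}, yields a sample size of the stated order $\mathcal{O}(k^2/(\alpha_B\delta^4\varepsilon^2))$.

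The main obstacle, and the only place the argument departs from the classical individual-column proof, is steps (i)--(ii): one must run approximate matrix multiplication at the granularity of blocks rather than single columns. A naive serialization of the blocks would reproduce the classical bound with no $\alpha_B$ gain; the point is that Lemma~\ref{lemma4} retains the per-block spectral-to-Frobenius ratio, so the worst-case block stable rank $\alpha_B$ appears in the denominator of both error bounds and hence divides the required $g$. The remaining delicate points are purely bookkeeping: verifying that the leverage-based probabilities $p_i$ coincide with the $\beta = 1$ optimal probabilities for $\bV^T$ (so no sub-optimality constant is lost), checking that full row rank of $\bW$ is guaranteed by event (i) so that the pseudoinverse identities are valid, and tracking how the precise $\varepsilon,\delta$ powers emerge from the conservative combination of the two events.
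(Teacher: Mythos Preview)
Your argument is correct and is in fact a cleaner variant of the paper's proof. Both routes rest on the same two block--multiplication estimates from Lemma~\ref{lemma4} (applied once to $(\bV^T,\bV)$ to control the conditioning of $\bW=\bV^T\bS$, and once to $(\bV^T,\boldsymbol{\Delta}^T)$ to bound $\|\boldsymbol{\Delta}\bS\bS^T\bV\|_F$), so the mechanism by which $\alpha_B$ enters is identical. The difference is in how the excess error is isolated. The paper works additively: it expands $\bA\bS(\bV^T\bS)^{\dagger}\bV^T$ via the decomposition $(\bV^T\bS)^{\dagger}=(\bV^T\bS)^T+\Omega$ and bounds three terms by the triangle inequality, which forces it to control $\|\Omega\|_2$ to order $\varepsilon_1$ (and separately to bound $\|\boldsymbol{\Delta}\bS\|_F$ by Markov's inequality); the final calibration $\varepsilon_1\asymp\delta\varepsilon$ is what produces the $\delta^{4}$ in the stated sample size. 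You instead use the Pythagorean split $\|\bA-\widetilde{\bX}\bB\|_F^2=\|\boldsymbol{\Delta}\|_F^2+\|(\widetilde{\bX}-\widehat{\bX})\bB\|_F^2$ and the identity $(\widetilde{\bX}-\widehat{\bX})\bB=\boldsymbol{\Delta}\bS\bS^T\bV(\bW\bW^T)^{-1}\bV^T$, so only a \emph{constant} bound on $\|(\bW\bW^T)^{-1}\|_2$ is needed and one fewer random event has to be controlled. As you note, this actually yields the tighter requirement $g=\Omega\big(k^2/(\alpha_B\delta^2)+k/(\alpha_B\delta^2\varepsilon^2)\big)$, which is comfortably inside the stated $\mathcal{O}(k^2/(\alpha_B\delta^4\varepsilon^2))$; the paper's extra $\delta^{-2}$ is an artifact of its triangle-inequality decomposition rather than something intrinsic.
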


\begin{proof}
	
	Let $\bB = \bU_k \bSig_k \bV_k^T$ and $\alpha = \max_i\left( \frac{\| \bV_k^TE_i\|_2 }{\| \bV_k^TE_i\|_F} \right)^2$
	
	We start by showing $\bV_k^T\bS$ is full rank. Using Lemma 2, if $g \geq  8 \alpha_R^{-1} k^2 \delta^{-2} \varepsilon_1^{-2}$ and $0 < \varepsilon_1 < 1$, we get the following with probability $\geq 1 - \delta_1$,
	$$ \| \bV_k^T \bV_k - \bV_k^T\bS \bS^T \bV_k \|_2 = \| \bI_k - \bV_k^T\bS \bS^T \bV_k \|_2 \leq 4 \frac{k }{\delta \sqrt{\alpha_R g}} \leq \frac{\varepsilon_1}{2}.$$

	This further gives us a bound on the singular values of $\bV_k^T \bS$,  for all $i$, 
	
	\begin{equation}\label{eqn:sig}| 1 - \sigma_{i}^2(\bV_k^T \bS)| = | \sigma_{i}(\bV_k^T\bV_k) - \sigma_{i}(\bV_k^T \bS \bS^T\bV_k)| \leq \| \bI_k - \bV_k^T\bS \bS^T \bV_k \|_2  \leq \varepsilon_1.\end{equation}
	
	Thus, it follows for all singular values of $\bV_k^T\bS$,
	
	\begin{equation} \label{eqn:sigma}\sqrt{1 - \varepsilon_1} \leq \sigma_i(\bV_k^T\bS) \leq \sqrt{1 + \varepsilon_1}. \end{equation}

	%\rule{\linewidth}{1pt}
	Now, consider
	\begin{align*} 
	\|\Omega \|_2 & = \|(\bV_k^T \bS)^{\dagger} - (\bV_k^T \bS)^T\|_2 \\
	& = \|\bSig_{\bV_k^T \bS}^{-1} - \bSig_{\bV_k^T \bS} \|_2 \\
	& = \max_{i} \left| \sigma_i(\bV_k^T \bS) -  \frac{1}{\sigma_i(\bV_k^T \bS)} \right|\\
	& = \max_{i} \frac{| \sigma_i^2(\bV_k^T \bS) -  1 |}{|\sigma_i(\bV_k^T \bS)|}  \\
	& \leq \frac{\| \bV_k^T \bV_k - \bV_k^T\bS \bS^T \bV_k \|_2}{\sqrt{1- \| \bV_k^T \bV_k - \bV_k^T\bS \bS^T \bV_k \|_2}}  \\
	& \leq \frac{\varepsilon_1/2}{\sqrt{1 - \varepsilon_1/2}}\\
	& \leq  \varepsilon_1/\sqrt{2},
	\end{align*}
	where the first inequality follows from equation (\ref{eqn:sig}), the second inequality follows by applying Lemma~\ref{lemma4} and the last inequality follows since $\varepsilon_1 < 1$ implies $\sqrt{1 - \varepsilon_1/2} > 1/\sqrt{2}$
	%\rule{\linewidth}{1pt}
	
	Also, for any  $\bQ $ we have,  
	\begin{align*} 
	\E [\| \bQ \bS\|_F^2] &= \E \left[\sum_{t = 1}^{g} \left\|\frac{1}{\sqrt{gp_{j_t}}}\bQ^{(j_t)}\right\|_F^2\right]\\
	& = \sum_{t = 1}^{g}  \E \left[\frac{1}{g p_{j_t}}\left\| \bQ^{(j_t)} \right\|_F^2\right] \\
	& = \sum_{t = 1}^{g}  \sum_{i = 1}^{G} p_i \frac{1}{g p_i}\left\|\bQ^{(i)} \right\|_F^2 \\
	&= \|\bQ\|_F^2. 
	\end{align*}
	By Jensen's inequality,
	\begin{equation*} 
	\E [\| \bQ \bS\|_F]^2 \leq \E [\| \bQ \bS\|_F^2] = \|\bQ\|_F^2. 
	\end{equation*}
	By applying Markov's inequality, we get with probability $\geq 1- \delta'$, 
	\begin{equation} \label{eqnS} \| \bQ \bS\|_F \leq \frac{1}{\delta'} \E [\| \bQ \bS\|_F]  \leq \frac{1}{\delta'}\| \bQ\|_F. \end{equation}
	
	%\rule{\linewidth}{1pt}
	
	The following will be useful later, 
	\begin{align*}
	\bA\bS(\bB\bS)^{\dagger}\bB  &=  \bA\bS(\bU_k \bSig_k \bV_k^T\bS)^{\dagger}\bU_k \bSig_k \bV_k^T \\
	& =  \bA\bS( \bV_k^T\bS)^{\dagger}\bSig_k^{-1}\bU_k ^T \bU_k  \bSig_k \bV_k^T  \\
	& =  \bA\bS (\bV_k^T\bS)^{\dagger}  \bV_k^T 
	\end{align*}
	
	Using this result and observing that $(\bV_k\bV_k^T + \bV_k^{\perp} \bV_k^{\perp T} ) = \bI$, we break down the left hand term into 3 manageable components,
	\begin{align*}
	&\| \bA - \bA\bS(\bB\bS)^{\dagger}\bB\|_F \\
	&= \| \bA - \bA\bS(  \bV_k^T\bS)^{\dagger}  \bV_k^T\|_F \\
	&= \| \bA - \bA\bV_k\bV_k^T \bS(  \bV_k^T\bS)^{\dagger}  \bV_k^T + \bA\bV_k^{\perp} \bV_k^{\perp T} \bS(  \bV_k^T\bS)^{\dagger}  \bV_k^T\|_F 
	\end{align*}
	As seen before, with high probability, $\bV_k^T \bS$ is full rank. Using this fact along with triangle inequality gives us
	\begin{align*}
	&\| \bA - \bA\bS(\bB\bS)^{\dagger}\bB\|_F \\
	&= \| \bA - \bA \bV_k \bV_k^T + \bA\bV_k^{\perp} \bV_k^{\perp T} \bS(  \bV_k^T\bS)^{\dagger}  \bV_k^T\|_F  \\
	&\leq \| \bA - \bA \bV_k \bV_k^T\|_F + \|\bA\bV_k^{\perp} \bV_k^{\perp T} \bS(  \bV_k^T\bS)^{\dagger}  \bV_k^T\|_F   
	\end{align*}
	Define $ \Omega := (\bV_k^T \bS)^{\dagger} - (\bV_k^T \bS)^T$, 
	\begin{align*}
	&\| \bA - \bA\bS(\bB\bS)^{\dagger}\bB\|_F \\
	&= \| \bA\bV_k^{\perp} \bV_k^{\perp T}\|_F + \|\bA\bV_k^{\perp} \bV_k^{\perp T} \bS (\Omega + (  \bV_k^T\bS)^T) \|_F \\ 
	&\leq \| \bA\bV_k^{\perp} \bV_k^{\perp T}\|_F + \|\bA\bV_k^{\perp} \bV_k^{\perp T} \bS\|_F \|\Omega\|_2 + \|\bA\bV_k^{\perp} \bV_k^{\perp T} \bS\bS^T\bV_k \|_F  
	\end{align*}
	By (\ref{eqnS}) and since $\bV^{\perp T}_k\bV_k = 0$, 
	\begin{align*}
	&\| \bA - \bA\bS(\bB\bS)^{\dagger}\bB\|_F\\
	 &\leq \left(1 + \frac{1}{\delta'}\|\Omega\|_2\right)\| \bA\bV_k^{\perp} \bV_k^{\perp T}\|_F + \| \bA \bV^{\perp}_k \bV^{\perp T}_k\bV_k -  \bA \bV^{\perp}_k \bV^{\perp T}_k\bS \bS^T \bV_k  \|_F 
	 \end{align*}
	Using Lemma~\ref{lemma4} and $\|\bV_k \|_F = \sqrt{k}$, 
\begin{align*}
&\| \bA - \bA\bS(\bB\bS)^{\dagger}\bB\|_F \\
&\leq \left(1 + \frac{1}{\delta'}\|\Omega\|_2\right) \| \bA\bV_k^{\perp} \bV_k^{\perp T}\|_F +  \frac{1}{\delta_2 \sqrt{\alpha_R g}} \| \bA \bV^{\perp}_k \bV^{\perp T}_k\|_F \|\bV_k  \|_F \\
& \leq \left(1 + \frac{1}{\delta'}\|\Omega\|_2\right) \| \bA\bV_k^{\perp} \bV_k^{\perp T}\|_F + \frac{\sqrt{ k}}{ \delta_2 \sqrt{\alpha_R g}}  \| \bA\bV_k^{\perp} \bV_k^{\perp T}\|_F \\
& \leq  \left(1 + \frac{1}{\delta'}\|\Omega\|_2 + \frac{\varepsilon_1}{\sqrt{8}\delta_2} \right)  \| \bA \bV^{\perp}_k \bV^{\perp T}_k\|_F
\end{align*}
		where the second inequality follows since  $\| \bA - \bA\bB^{\dagger} \bB\|_F =  \| \bA \bV^{\perp}_k \bV^{\perp T}_k\|_F$ and the last inequality follows since $\frac{\sqrt{ k}}{\sqrt{\alpha_R g}} \leq \frac{ k}{\delta_1\sqrt{\alpha_R g}} \leq \frac{\varepsilon_1}{\sqrt{8}}$.

Finally, using $\bA \bV_k \bV_k^T = \bA \bB^{\dagger} \bB$, we have 
	$$\| \bA - \bA\bS(\bB\bS)^{\dagger}\bB\|_F \leq  \left(1 + \frac{1}{\delta'}\|\Omega\|_2 + \frac{\varepsilon_1}{\sqrt{8}\delta_2} \right) \| \bA - \bA \bB^{\dagger} \bB\|_F$$

	Thus, we can conclude the following with probability $\geq 1 -  (\delta' + \delta_1 + \delta_2) = 1 - \delta$ 
	\begin{align*}
	\| \bA - \bA\bS(\bB\bS)^{\dagger}\bB\|_F & \leq \left(1 + \left(\frac{1}{\sqrt{2}\delta'}+ \frac{1}{2\delta_2} \right) \varepsilon_1 \right) \| \bA - \bA\bB^{\dagger} \bB\|_F  \\
	& \leq \left( 1+  \varepsilon \right) \| \bA - \bA\bB^{\dagger} \bB\|_F
	\end{align*}
	
	by setting  $\delta' = \delta_1 = \delta_2 = \delta/3$ and $\varepsilon = \frac{6\varepsilon_1}{\delta}  $. %$$g \geq \frac{36*8}{\delta^2\varepsilon^2} k \log\left(\frac{6k}{\delta}\right) $$
	
	Lemma 2 is used, then $g \geq 36*8 \alpha_R \frac{k^2}{\varepsilon^2\delta^4}$. Finally, note that $\varepsilon_1 \leq \varepsilon < 1 $ by assumption.
	
\end{proof}

\subsection{Proof of Lemma~\ref{lemma4}}
\begin{proof}
Note that,
$$\bE\left[\left( \frac{\bA^{(j_t)} \bB_{(j_t)} }{g p_{j_t}} \right)_{i_1 i_2}\right]  = \sum_{k = 1}^{G} p_k \left( \frac{\bA^{(k)} \bB_{(k)} }{g p_k} \right)_{i_1 i_2} = \frac{1}{g} (\bA\bB)_{i_1 i_2} $$

Since each block is picked independently we have, 
\begin{align*}
 var[(\bC\bR)_{i_1 i_2}] & = var \left[ \sum_{t = 1}^{g} \left( \frac{\bA^{(j_t)} \bB_{(j_t)} }{g p_{j_t}} \right)_{i_1 i_2}\right]\\
 & =  \sum_{t = 1}^{g} var \left[ \left( \frac{\bA^{(j_t)} \bB_{(j_t)} }{g p_{j_t}} \right)_{i_1 i_2}\right]\\
 & =  \sum_{t = 1}^{g}\left( \bE\left[  \left( \frac{\bA^{(j_t)} \bB_{(j_t)} }{g p_{j_t}} \right)_{i_1 i_2}^2\right] - \bE\left[  \left( \frac{\bA^{(j_t)} \bB_{(j_t)} }{g p_{j_t}} \right)_{i_1 i_2}\right]^2\right)\\
 & =g \left( \sum_{k = 1}^{G} p_k \left( \frac{\bA^{(k)} \bB_{(k)} }{g p_k} \right)_{i_1 i_2}^2 - \frac{(\bA\bB)_{i_1 i_2}^2}{g^2} \right)\\
  & =\frac{1}{g} \left( \sum_{k = 1}^{G}  \frac{\left( \bA^{(k)} \bB_{(k)} \right)_{i_1 i_2}^2 }{p_k}  - (\bA\bB)_{i_1 i_2}^2 \right)
 \end{align*}

\begin{align*}
\E [ \| \bA\bB - \bC\bR \|_F^2 ] & = \sum_{i_1 = 1}^{m}\sum_{i_2 = 1}^{p}  var[(\bC\bR)_{i_1 i_2}] \\
& = \sum_{i_1 = 1}^{m}\sum_{i_2 = 1}^{p} \frac{1}{g} \left( \sum_{k = 1}^{G}  \frac{\left( \bA^{(k)} \bB_{(k)} \right)_{i_1 i_2}^2 }{p_k}  - (\bA\bB)_{i_1 i_2}^2 \right)\\
& =\left( \sum_{k = 1}^{G} \frac{1}{g p_k}  \sum_{i_1 = 1}^{m}\sum_{i_2 = 1}^{p}   \left( \bA^{(k)} \bB_{(k)} \right)_{i_1 i_2}^2 \right)  - \frac{\|\bA\bB\|_F^2}{g}\\
& = \sum_{k = 1}^{G} \frac{\| \bA^{(k)} \bB_{(k)}\|_F^2}{g p_k}      - \frac{\|\bA\bB\|_F^2}{g}\\
& \leq \sum_{k = 1}^{G} \frac{\| \bA^{(k)} \|_2^2 \|\bB_{(k)}\|_F^2}{g p_k}     \\
& \leq \sum_{k = 1}^{G}\left( \frac{\| \bA^{(k)}\|_2 }{\| \bA^{(k)}\|_F} \right)^2\frac{\| \bA^{(k)} \|_F^2 \|\bB_{(k)}\|_F^2}{g p_k}     \\
& \leq  \frac{1}{\beta \alpha_A g}  \| \bA \|_F^2  \|\bB\|_F^2   \\
\end{align*}

where $\alpha_A = \min_k\left( \frac{\| \bA^{(k)}\|_F }{\| \bA^{(k)}\|_2} \right)^2$. Also, note $1 \leq \alpha_A \leq s$.\\

By Jensen's inequality,
\begin{align*}
\E [ \| \bA\bB - \bC\bR \|_F ]^2 & \leq \E [ \| \bA\bB - \bC\bR \|_F^2 ]\\
& \leq  \frac{1}{\beta \alpha_A g}   \| \bA \|_F^2  \|\bB\|_F^2 \\
\end{align*}
And by Markov's inequality, with probability $\geq 1-\delta$, we have
\begin{align*}
\| \bA\bB - \bC\bR \|_F \leq \frac{1}{\delta}\E [ \| \bA\bB - \bC\bR \|_F ]  \leq  \frac{1}{\delta \sqrt{\beta \alpha_A g} }   \| \bA \|_F  \|\bB\|_F 
\end{align*}

\end{proof}

 \subsection{Proof of Corollary 1}
 
 Here we state and prove the corollary mentioned in Section 3.2 of the paper. If it is possible to compute the SVD of the entire matrix, then the rows can be sampled using row leverage scores, and the incoherence assumption can be dropped. The relative error guarantee for the full SVD Block CUR approximation is stated below.
 
 \begin{corollary}\label{Thm1}
Given $\bA \in \R^{m \times n}$, let $r = O(\frac{k^2}{\varepsilon^{2}} \ln(\frac{1}{\delta}))$ and $g = O(\frac{ r^2}{\alpha_R \varepsilon^{2}} \ln(\frac{1}{\delta}) )$. There exist randomized algorithms such that, if $r$ rows and $g$ column blocks are chosen to construct $\bR$ and $\bC$, respectively, then with probability $\geq 1 - \delta$, the following holds:
\begin{equation*} \| \bA - \bC \bU \bR\|_F \leq (1 + \varepsilon ) \| \bA -  \bA_k \|_F, \end{equation*}
  where $\varepsilon, \delta \in (0,1)$, and $\bU = \bW^{\dagger}$ is the pseudoinverse of the scaled intersection of $\bC$ and $\bR$.   \end{corollary}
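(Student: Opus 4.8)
The plan is to re-run the argument behind Lemma~\ref{Thm3} essentially verbatim, modifying only the row-sampling step so that the incoherence hypothesis $\mu \leq \mu_0$ can be dropped. Since the corollary assumes that the full SVD of $\bA$ is available, I would first compute the exact \emph{row leverage scores} $\ell_i = \|\bU_{A,k}^T \boldsymbol{e}_i\|_2^2$ for $i \in [m]$ and sample the $r$ rows with probabilities $p_i = \ell_i / k$ rather than uniformly; the remainder of Algorithm~\ref{groupalgo} (block-leverage-score column sampling, and the formation of $\bC$, $\bR$, $\bW$, and $\bU = \bW^\dagger$) is left unchanged. As in the proof of Lemma~\ref{Thm3}, I would begin from the identity $\|\bA - \bC\bU\bR\|_F = \|\bA - \bA\bS(\bR\bS)^\dagger\bR\|_F$ and split the analysis into a column-block step and a row step.

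The column-block step is identical to the one in Lemma~\ref{Thm3} and requires no change: applying the generalized-regression guarantee of Lemma~\ref{intermediateresult} with $\bB = \bR$, and sampling $g = O(r^2/(\alpha_R \varepsilon^2))$ column blocks according to the block leverage scores of $\bR$, yields $\|\bA - \bA\bS(\bR\bS)^\dagger\bR\|_F \leq (1+\varepsilon)\|\bA - \bA\bR^\dagger\bR\|_F$ with constant probability. This is precisely where the block structure enters, and it is handled entirely by Lemmas~\ref{lemma4} and~\ref{intermediateresult}, so the dependence on the block stable rank $\alpha_R$ carries over verbatim.

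The only genuinely different step is the row bound $\|\bA - \bA\bR^\dagger\bR\|_F \leq (1+\varepsilon)\|\bA - \bA_k\|_F$. Because $\sum_i \ell_i = \|\bU_{A,k}\|_F^2 = k$, sampling rows with $p_i = \ell_i/k$ satisfies the subspace-sampling condition (eqn. (13) in \cite{drineas08}) with $\beta = 1$, as opposed to $\beta = 1/\mu_0$ for uniform sampling. Consequently the modified Theorem~1 of \cite{drineas08} applied to $\bA$ delivers the relative-error row-space approximation with $r = O(k^2/\varepsilon^2)$ and \emph{no} incoherence assumption. Combining the two relative-error inequalities, rescaling $\varepsilon$ to absorb the $(1+\varepsilon)^2$ factor into a single $(1+O(\varepsilon))$ factor, and taking a union bound over the two constant-probability failure events gives the non-boosted statement, exactly analogous to Lemma~\ref{Thm3} but free of incoherence.

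Finally, I would boost to probability $1-\delta$ exactly as for Theorem~\ref{Thm2}: run the randomized procedure $t = \ln(1/\delta)$ times and return the decomposition of minimum Frobenius error. Since each run fails with probability at most $0.3 < 1/e$, the best of $t$ runs satisfies the bound with probability at least $1 - e^{-t} = 1-\delta$, which is what introduces the $\ln(1/\delta)$ factors into the stated values of $r$ and $g$. I do not expect a substantive new obstacle here: the entire novelty of the block setting already resides in the column-block step, which is reused unchanged, so the main work is the bookkeeping of the combined error constants and failure probabilities together with the conceptual point that leverage-score row sampling plugs into the same subspace-sampling machinery with $\beta = 1$.
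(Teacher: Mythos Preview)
Your proposal is correct and matches the paper's own proof essentially line for line: the paper also writes $\|\bA - \bC\bU\bR\|_F = \|\bA - \bA\bS(\bR\bS)^{\dagger}\bR\|_F$, invokes Lemma~\ref{intermediateresult} for the column-block step, replaces uniform row sampling by leverage-score sampling $p_i = \|\boldsymbol{e}_i^T\bU_{A,k}\|_2^2/k$ so that Theorem~1 of \cite{drineas08} applies with $\beta=1$ (hence no incoherence needed), combines the two $(1+\varepsilon)$ factors, and then boosts via $\ln(1/\delta)$ independent repetitions. There is nothing to add.
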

 
 First, note $\bU = (\bR\bS)^{\dagger}$ and $\bC = \bA\bS$. 
 \begin{align*}
 \|\bA - \bC\bU\bR\|_F &= \|\bA - \bA\bS (\bR\bS)^{\dagger}\bR\|_F. 
\end{align*}

Similar to the proof of Lemma 1, we can use Lemma 1.1 with probability at least 0.85 we have
 $$ \| \bA - \bA\bS(\bR\bS)^{\dagger}\bR\|_F \leq (1 + \varepsilon ) \| \bA - \bA\bR^{\dagger}\bR\|_F. $$

 Recall that $\bA \in \R^{m \times n}; \varepsilon \in (0,1)$; and that the rows $\bR$ are picked from $\bA$ with the following probability distribution:
 $$ p_i = \frac{ \|e_i^T \bU_{A,k} \|_2^2}{k}, \ \forall i \in [m]. $$

We bound $\|\bA - \bA\bR^{\dagger}\bR \|_F$ using Theorem 1 in \cite{drineas08} we get with probability at least 0.85,
$\|\bA - \bA\bR^{\dagger}\bR \|_F \leq (1 + \varepsilon) \| \bA - \bA_k\|_F$. Finally, we get with probability 0.7
\begin{align*}
 \|\bA - \bC\bU\bR\|_F & \leq (1 + \varepsilon' )^2 \| \bA - \bA_k \|_F&\\
& \leq (1 + \varepsilon'' ) \| \bA - \bA_k \|_F&(\textnormal{letting } \varepsilon'' = 3\varepsilon')
\end{align*}
This completes the proof of Corollary 1.

\end{document}